\def\eqref#1{equation~\ref{#1}}
\def\1{\bm{1}}
\DeclareMathAlphabet{\mathsfit}{\encodingdefault}{\sfdefault}{m}{sl}
\SetMathAlphabet{\mathsfit}{bold}{\encodingdefault}{\sfdefault}{bx}{n}
\theoremstyle{plain}
\newtheorem{theorem}{Theorem}[section]
\newtheorem{lemma}[theorem]{Lemma}
\newtheorem{corollary}[theorem]{Corollary}
\theoremstyle{definition}
\newtheorem{definition}[theorem]{Definition}
\newtheorem{assumption}[theorem]{Assumption}
\theoremstyle{remark}
\definecolor{myblue}{HTML}{0BBCD6}
\definecolor{myred}{HTML}{EF3E4A}
\definecolor{citecolor}{RGB}{56,89,197}
\definecolor{linkcolor}{RGB}{137,158,167}
\title{Representation Convergence: Mutual Distillation is Secretly a Form of Regularization}
\author{Zhengpeng Xie$^1$, ~Jiahang Cao$^1$, ~Changwei Wang$^2$, ~Fan Yang$^3$, ~Marco Hutter$^3$ \\
\bfseries Qiang Zhang$^1$, ~Jianxiong Zhang$^4$, ~Renjing Xu$^1$\thanks{Corresponding author. First author: \texttt{zhengpengxie00@gmail.com}.} \\
$^1$HKUST(GZ) ~$^2$QLU ~$^3$ETH Zurich ~$^4$SCU\\
}
\begin{document}

\maketitle

\begin{abstract}
In this paper, we argue that mutual distillation between reinforcement learning policies serves as an \textit{implicit regularization}, preventing them from overfitting to irrelevant features. We highlight two \textit{separate} contributions: (i) Theoretically, for the first time, we prove that enhancing the policy robustness to irrelevant features leads to improved generalization performance. (ii) Empirically, we demonstrate that mutual distillation between policies contributes to such robustness, enabling the spontaneous emergence of \textit{invariant representations} over pixel inputs. Ultimately, we do not claim to achieve state-of-the-art performance but rather focus on uncovering the underlying principles of generalization and deepening our understanding of its mechanisms. Our website: \url{https://dml-rl.github.io/}.
\end{abstract}

\section{Introduction}
Humans exhibit a remarkable ability to learn robustly and generalize across diverse environments. Once a skill is acquired, it often transfers seamlessly to new contexts that share the same underlying semantics, even when their visual appearance differs substantially. For example, consider a person who becomes proficient at a video game, even if the background graphics or character textures are altered, the player retains their ability to perform well, effortlessly adapting to the new setting. This suggests that human learning is not overly dependent on low-level visual details, but rather grounded in abstract representations that capture the essential structure of a task. Neuroscientific studies support this view, linking abstract reasoning to the human prefrontal cortex \citep{bengtsson2009representation, dumontheil2014development}, and highlighting the role of inhibitory neurons in enhancing cognitive processing efficiency \citep{pi2013cortical}.

In stark contrast, visual reinforcement learning (VRL) agents often struggle with generalization. While they can be trained to solve complex tasks in specific environments, even minor changes, such as shifts in color schemes or background textures, can significantly degrade their performance. This sensitivity indicates that VRL agents tend to overfit to superficial visual features, failing to capture the underlying structure of the task \citep{cobbe2019quantifying, cobbe2020leveraging}. These limitations give rise to a fundamental question:
\begin{center}
	\textit{What hinders reinforcement learning agents from generalizing like humans? How can we enable them to learn robust representations that drive human-like generalization behavior?}
\end{center}

The core reason behind the limited generalization ability of VRL agents lies in their reliance on convolutional neural networks (CNNs) as visual encoders. While CNNs are the de facto choice for processing high-dimensional visual inputs, they are notoriously sensitive to even small perturbations \citep{goodfellow2014explaining}. This brittleness significantly hampers the robustness of learned policies and limits their ability to generalize across visually diverse environments. To address this issue, one common strategy is to apply data augmentation \citep{shorten2019survey}, which improves robustness by diversifying the training distribution and reducing dataset-induced biases. Alternatively, invariant representation learning has emerged as a principled approach to tackle generalization problem from a feature-learning perspective. It aims to extract representations that remain stable under a wide range of input transformations, thereby promoting robustness and transferability \citep{nguyen2021domain}.

While data augmentation is an effective bias mitigation technique, its reliance on task-specific strategies that are manually crafted by human experts, poses a challenge for designing task-independent solutions. In contrast, our method enables agents to generalize without any handcrafted augmentations or external priors, relying purely on training experience. Invariant representation learning is a promising approach to enhance model's cross-domain generalization. However, it relies on transformation correspondences, which are fundamentally inaccessible in the generalization scenarios of reinforcement learning due to the dynamic nature of environments. In addition, invariant representation framework inherently separates the encoder from the model, unnecessarily complicating the theoretical analysis. Instead, our framework is theoretically and empirically end-to-end.

In this paper, we first propose a novel theoretical framework to analyze the generalization problem in reinforcement learning and show that the policy robustness to irrelevant features enhances its generalization performance. Building upon this principled insight, we then provide empirical evidence that deep mutual learning (DML) \citep{zhang2018deep} can implicitly prevent them from overfitting to such irrelevant features, leading to robust learning process and significant generalization improvements.

In summary, the main contributions of this paper are as follows:
\begin{itemize}
	\item We theoretically prove that improving the policy robustness to irrelevant features enhances its generalization performance. To the best of our knowledge, we are the first to provide a rigorous proof of this intuition.
	\item We propose a hypothesis that deep mutual learning (DML) enhances the generalization performance of the policy by implicitly regularizing irrelevant features. We also provide intuitive insights to support this hypothesis.
	\item Strong empirical results support our theory and hypothesis, showing that DML technique leads to consistent improvements in generalization performance.
\end{itemize}

\section{Preliminaries}
In this section, we introduce reinforcement learning under the generalization setting in Section \ref{subsec:mdp}, as well as the DML technique in Section \ref{subsec:dml}.

\subsection{Markov Decision Process and Generalization}
\label{subsec:mdp}
Markov decision process (MDP) is a mathematical framework for sequential decision-making, which is defined by a tuple $\mathcal{M}=\left(\mathcal{S},\mathcal{A},r,\mathcal{P},\rho,\gamma\right)$, where $\mathcal{S}$ and $\mathcal{A}$ represent the state space and action space, $r:\mathcal{S}\times\mathcal{A}\mapsto\mathbb{R}$ is the reward function, $\mathcal{P}:\mathcal{S}\times\mathcal{A}\times\mathcal{S}\mapsto[0,1]$ is the dynamics, $\rho:\mathcal{S}\mapsto[0,1]$ is the initial state distribution, and $\gamma\in(0,1)$ is the discount factor.

Define a policy $\mu:\mathcal{S}\times\mathcal{A}\mapsto[0,1]$, the action-value function and value function are defined as
\begin{equation}
	Q^{\mu}(s_t,a_t)=\mathbb{E}_{\mu}\left[\sum_{k=0}^{\infty}\gamma^k r(s_{t+k},a_{t+k})\right],\enspace V^{\mu}(s_t)=\mathbb{E}_{a_t\sim\mu(\cdot|s_t)}\left[Q^{\mu}(s_t,a_t)\right].
\end{equation}
Given $Q^{\mu}$ and $V^{\mu}$, the advantage function can be expressed as $A^{\mu}(s_t,a_t)=Q^{\mu}(s_t,a_t)-V^{\mu}(s_t)$.

In our generalization setting, we introduce a rendering function $f:\mathcal{S}\mapsto\mathcal{O}_f\subset\mathcal{O}$ to obfuscate the agent's actual observations, which is a \textit{bijection}\footnote{We define $\mathcal{O}_f:=\left\{f(s)\vert s\in\mathcal{S}\right\}$, which means for any $s_1\neq s_2$, we have $f(s_1)\neq f(s_2)$.} from $\mathcal{S}$ to $\mathcal{O}_f$. We now define the MDP induced by the underlying MDP $\mathcal{M}$ and the rendering function $f$, denote it as $\mathcal{M}_f=\left(\mathcal{O}_f,\mathcal{A},r_f,\mathcal{P}_f,\rho_f,\gamma\right)$, where $\mathcal{O}_f$ represents the observation space, $r_f:\mathcal{O}_f\times\mathcal{A}\mapsto\mathbb{R}$ is the reward function, $\mathcal{P}_f:\mathcal{O}_f\times\mathcal{A}\times\mathcal{O}_f\mapsto[0,1]$ is the dynamics, and $\rho_f:\mathcal{O}_f\mapsto[0,1]$ is the initial observation distribution. We present the following assumptions:
\begin{assumption}\label{assumption 1}
	Assume that $f$ can be sampled from a distribution $p:\mathcal{F}\mapsto[0,1]$, where $f\in\mathcal{F}$.
\end{assumption}

\begin{assumption}\label{assumption 2}
	Given any $f\in\mathcal{F}$, $o_0^f,o_t^f,o_{t+1}^f\in\mathcal{O}_f$ and $a_t\in\mathcal{A}$, assume that $r_f(o_t^f,a_t)=r(f^{-1}(o_t^f),a_t),\mathcal{P}_f(o_{t+1}^f|o_t^f,a_t)=\mathcal{P}(f^{-1}(o_{t+1}^f)|f^{-1}(o_t^f),a_t),\rho_f(o_0^f)=\rho(f^{-1}(o_0^f))$.
\end{assumption}

\textbf{Explanation.} Assumption \ref{assumption 2} states that all $\mathcal{M}_f$ share a common underlying MDP $\mathcal{M}$, in which the agent's observations are perturbed by different rendering functions while all other components remain unchanged, much like different painters depicting the same scene in their own styles.

Next, consider an agent interacting with $\mathcal{M}_f$ following the policy $\pi:\mathcal{O}\times\mathcal{A}\mapsto[0,1]$ to obtain a trajectory
\begin{equation}
	\tau_f=(o_0^f,a_0,r_0^f,o_1^f,a_1,r_1^f,\dots,o_t^f,a_t,r_t^f,\dots),
\end{equation}
where $o_0^f\sim\rho_f(\cdot)$, $a_t\sim\pi(\cdot|o_t^f)$, $r_t^f=r_f(o_t^f,a_t)$ and $o_{t+1}^f\sim\mathcal{P}_f(\cdot|o_t^f,a_t)$, we simplify the notation to $\tau_f\sim\pi$. During training, the agent is only allowed to access a subset of all MDPs, which is $\left\{\mathcal{M}_f|f\in\mathcal{F}_{\mathrm{train}}\subset\mathcal{F}\right\}$, and then tests its generalization performance across all MDPs. Thus, denote $p_{\mathrm{train}}:\mathcal{F}_{\mathrm{train}}\mapsto[0,1]$ as the distribution over $\mathcal{F}_{\mathrm{train}}$, the agent's training performance $\eta(\pi)$ and generalization performance $\zeta(\pi)$ can be expressed as
\begin{equation}\label{training and generalization performance}
	\eta(\pi)=\mathbb{E}_{f\sim p_{\mathrm{train}}(\cdot),\tau_f\sim\pi}\left[\sum_{t=0}^{\infty}\gamma^tr_f(o_t^f,a_t)\right],\enspace\zeta(\pi)=\mathbb{E}_{f\sim p(\cdot),\tau_f\sim\pi}\left[\sum_{t=0}^{\infty}\gamma^tr_f(o_t^f,a_t)\right].
\end{equation}
The goal of the agent is to learn a policy $\pi$ that maximizes the generalization performance $\zeta(\pi)$.

\subsection{Deep Mutual Learning}
\label{subsec:dml}
Deep mutual learning (DML) 
\citep{zhang2018deep} is a mutual distillation technique in supervised learning. Unlike the traditional teacher-student distillation strategy, DML aligns the probability distributions of multiple student networks by minimizing the KL divergence loss during training, allowing them to learn from each other. Specifically,
\begin{equation}\label{dml loss}
	\mathcal{L}_{\mathrm{DML}}=\mathcal{L}_{\mathrm{SL}}+\alpha\mathcal{L}_{\mathrm{KL}},
\end{equation}
where $\mathcal{L}_{\mathrm{SL}}$ and $\mathcal{L}_{\mathrm{KL}}$ represent the supervised learning loss and the KL divergence loss, respectively, $\alpha$ is the weight. Using DML, the student cohort effectively pools their collective estimate of the next most likely classes. Finding out and matching the other most likely classes for each training instance according to their peers increases each student's posterior entropy, which helps them converge to a more robust representation, leading to better generalization.

\section{Theoretical Results} \label{Theoretical Results}
In this section, we present the main results of this paper, demonstrating that enhancing the agent's robustness to irrelevant features will improve its generalization performance.

A key issue is that we do not exactly know the probability distribution $p_{\mathrm{train}}$. Note that $\mathcal{F}_{\mathrm{train}}$ is a subset of $\mathcal{F}$, we naturally assume that the probability distribution $p_{\mathrm{train}}$ can be derived from the normalized probability distribution $p$.
\begin{assumption}\label{assumption 3}
	For any $f\in\mathcal{F}$, assume that
	\begin{equation}
		p_{\mathrm{train}}(f)=\frac{p(f)\cdot\mathbb{I}(f\in\mathcal{F}_{\mathrm{train}})}{Z},\enspace p_{\mathrm{eval}}(f)=\frac{p(f)\cdot\mathbb{I}(f\in\mathcal{F}_{\mathrm{eval}})}{1-Z},
	\end{equation}
	where $Z=\int_{\mathcal{F}_{\mathrm{train}}}p(f)\mathrm{d}f$ and $1-Z$ is the normalization term, $\mathcal{F}_{\mathrm{eval}}=\mathcal{F}-\mathcal{F}_{\mathrm{train}}$, $\mathbb{I}(\cdot)$ denotes the indicator function.
\end{assumption}

An interesting fact is that, for a specific policy $\pi$, if we only consider its interaction with $\mathcal{M}_f$, we can establish a bijection between this policy and a certain underlying policy that directly interacts with $\mathcal{M}$. We now denote it as $\mu_f(\cdot|s_t)=\pi(\cdot|f(s_t))$. By further defining the normalized discounted visitation distribution $d^{\mu}(s)=(1-\gamma)\sum_{t=0}^{\infty}\gamma^t\mathbb{P}(s_t=s|\mu)$, we can use this underlying policy $\mu_f$ to replace the training and generalization performance of the policy $\pi$. Specifically, we have the following connection:
\begin{lemma}\label{lemma}
	For any given policy $\pi$, define its underlying policy as $\mu_f(\cdot|s_t)=\pi(\cdot|f(s_t))$, then
	\begin{equation}
		\eta(\pi)=\frac{1}{1-\gamma}\mathop{\mathbb{E}}_{\substack{f\sim p_{\mathrm{train}}(\cdot)\\s\sim d^{\mu_f}(\cdot)\\a\sim\mu_f(\cdot|s)}}\left[r(s,a)\right],\enspace\zeta(\pi)=\frac{1}{1-\gamma}\mathop{\mathbb{E}}_{\substack{f\sim p(\cdot)\\s\sim d^{\mu_f}(\cdot)\\a\sim\mu_f(\cdot|s)}}\left[r(s,a)\right].
	\end{equation}
\end{lemma}
\begin{proof}
	See Appendix \ref{proof lemma}.
\end{proof}

We can thus analyze the generalization problem using the underlying policy $\mu_f$. Then, define $L_{\pi}$ as the first-order approximation of $\eta$ \citep{schulman2015trust}, we can derive the following lower bounds:
\begin{theorem}[Training performance lower bound] \label{theorem 1}
	Given any two policies, $\tilde{\pi}$ and $\pi$, the following bound holds:
	\begin{equation}
		\eta(\tilde{\pi})\geq L_{\pi}(\tilde{\pi})-\frac{2\gamma\epsilon_{\mathrm{train}}}{(1-\gamma)^2}\mathop{\mathbb{E}}_{\substack{f\sim p_{\mathrm{train}}(\cdot)\\s\sim d^{\mu_f}(\cdot)}}\left[D_{\mathrm{TV}}(\tilde{\mu}_f\Vert\mu_f)[s]\right],
	\end{equation}
	where $\epsilon_{\mathrm{train}}=\max_{f\in\mathcal{F}_{\mathrm{train}}}\left\{\max_s\left\vert\mathbb{E}_{a\sim\tilde{\mu}_f(\cdot|s)}\left[A^{\mu_f}(s,a)\right]\right\vert\right\}$.
\end{theorem}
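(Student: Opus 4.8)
The plan is to reduce the multi-environment bound to the classical single-MDP trust-region inequality applied separately to each rendering function $f$, and then aggregate by taking an expectation over $f\sim p_{\mathrm{train}}$. The starting point is the equivalence established above: interacting with $\mathcal{M}_f$ under $\pi$ is identical in distribution to interacting with the shared underlying MDP $\mathcal{M}$ under the induced policy $\mu_f(\cdot|s)=\pi(\cdot|f(s))$, since $f$ is a bijection and \cref{assumption 2} preserves reward, dynamics, and initial distribution under $f^{-1}$. Consequently both the objective and its first-order surrogate decompose as $\eta(\tilde{\pi})=\mathbb{E}_{f\sim p_{\mathrm{train}}}[\eta_{\mathcal{M}}(\tilde{\mu}_f)]$ and $L_{\pi}(\tilde{\pi})=\mathbb{E}_{f\sim p_{\mathrm{train}}}[L_{\mu_f}(\tilde{\mu}_f)]$, where $\eta_{\mathcal{M}}$ and $L_{\mu_f}$ are the discounted return and its surrogate computed inside $\mathcal{M}$.

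First I would fix an arbitrary $f\in\mathcal{F}_{\mathrm{train}}$ and establish, entirely within $\mathcal{M}$, the per-environment inequality
\[
\eta_{\mathcal{M}}(\tilde{\mu}_f)\ge L_{\mu_f}(\tilde{\mu}_f)-\frac{2\gamma\epsilon_f}{(1-\gamma)^2}\mathop{\mathbb{E}}_{s\sim d^{\mu_f}}\left[D_{\mathrm{TV}}(\tilde{\mu}_f\Vert\mu_f)[s]\right],
\]
where $\epsilon_f=\max_s\lvert\mathbb{E}_{a\sim\tilde{\mu}_f}[A^{\mu_f}(s,a)]\rvert$. This is the core technical step. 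It rests on the performance-difference identity $\eta_{\mathcal{M}}(\tilde{\mu}_f)-\eta_{\mathcal{M}}(\mu_f)=\frac{1}{1-\gamma}\mathbb{E}_{s\sim d^{\tilde{\mu}_f},a\sim\tilde{\mu}_f}[A^{\mu_f}(s,a)]$, which differs from $L_{\mu_f}(\tilde{\mu}_f)-\eta_{\mathcal{M}}(\mu_f)$ only in using the state distribution $d^{\tilde{\mu}_f}$ rather than $d^{\mu_f}$. I would bound the resulting mismatch by $\max_s\lvert\mathbb{E}_{a\sim\tilde{\mu}_f}[A^{\mu_f}(s,\cdot)]\rvert\cdot\lVert d^{\tilde{\mu}_f}-d^{\mu_f}\rVert_1$, and then control the visitation shift through $\lVert d^{\tilde{\mu}_f}-d^{\mu_f}\rVert_1\le\frac{2\gamma}{1-\gamma}\mathbb{E}_{s\sim d^{\mu_f}}[D_{\mathrm{TV}}(\tilde{\mu}_f\Vert\mu_f)[s]]$, which is obtained by propagating the one-step policy discrepancy through the discounted transition operator. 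The factor $\frac{1}{1-\gamma}$ from the performance-difference identity and the factor $\frac{\gamma}{1-\gamma}$ from the visitation shift together produce the $(1-\gamma)^{-2}$ dependence and the leading constant $2$.

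Having the per-$f$ bound, I would apply $\mathbb{E}_{f\sim p_{\mathrm{train}}}[\cdot]$ to both sides. By the decompositions above, the left-hand side becomes $\eta(\tilde{\pi})$ and the surrogate term becomes $L_{\pi}(\tilde{\pi})$. For the penalty term, since $\epsilon_f\le\epsilon_{\mathrm{train}}=\max_{f\in\mathcal{F}_{\mathrm{train}}}\epsilon_f$ holds uniformly, I replace each $\epsilon_f$ by $\epsilon_{\mathrm{train}}$ and factor it out of the expectation, leaving exactly
\[
\frac{2\gamma\epsilon_{\mathrm{train}}}{(1-\gamma)^2}\mathop{\mathbb{E}}_{\substack{f\sim p_{\mathrm{train}}(\cdot)\\s\sim d^{\mu_f}(\cdot)}}\left[D_{\mathrm{TV}}(\tilde{\mu}_f\Vert\mu_f)[s]\right],
\]
which is the term claimed in the statement.

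I expect the main obstacle to be the single-MDP step, specifically the rigorous control of the visitation-distribution shift $\lVert d^{\tilde{\mu}_f}-d^{\mu_f}\rVert_1$ in terms of the expected per-state total-variation distance; the multi-environment aggregation is then routine, relying only on linearity of expectation and the uniform bound $\epsilon_f\le\epsilon_{\mathrm{train}}$. A subtlety worth verifying is that $A^{\mu_f}$ and $d^{\mu_f}$ are all defined in the \emph{same} shared MDP $\mathcal{M}$ under \cref{assumption 2}, so that every per-$f$ inequality is an instance of one underlying-MDP bound and the expectation over $f$ introduces no cross terms.
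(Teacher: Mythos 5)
Your proposal is correct and follows essentially the same route as the paper: both reduce $\eta(\tilde{\pi})-L_{\pi}(\tilde{\pi})$ to the gap between the advantage expectation under $d^{\tilde{\mu}_f}$ versus $d^{\mu_f}$ (the paper's Lemma \ref{lemma 1} and Corollary \ref{corollary 1}), control it via H\"older together with the visitation-shift bound $\Vert d^{\tilde{\mu}_f}-d^{\mu_f}\Vert_1\leq\frac{2\gamma}{1-\gamma}\mathbb{E}_{s\sim d^{\mu_f}}[D_{\mathrm{TV}}(\tilde{\mu}_f\Vert\mu_f)[s]]$ (Lemma \ref{lemma 2}), and then aggregate over $f\sim p_{\mathrm{train}}$ using $\epsilon_f\leq\epsilon_{\mathrm{train}}$. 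The only cosmetic difference is that you prove the per-$f$ inequality first and then take the expectation, whereas the paper pushes the absolute value inside the expectation over $f$ directly; the two orderings are equivalent.
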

\begin{proof}
	See Appendix \ref{proof 1}.
\end{proof}

\begin{theorem}[Generalization performance lower bound] \label{theorem 2}
	Given any two policies, $\tilde{\pi}$ and $\pi$, the following bound holds:
	\begin{equation}
		\begin{split}
			\zeta(\tilde{\pi})&\geq L_{\pi}(\tilde{\pi})-\frac{2r_{\max}(1-Z)}{1-\gamma}-\frac{2\gamma\epsilon_{\mathrm{train}}}{(1-\gamma)^2}\mathop{\mathbb{E}}_{\substack{f\sim p_{\mathrm{train}}(\cdot)\\s\sim d^{\mu_f}(\cdot)}}\left[D_{\mathrm{TV}}(\tilde{\mu}_f\Vert\mu_f)[s]\right]\\
			&-\frac{2\delta_{\mathrm{train}}(1-Z)}{1-\gamma}\mathop{\mathbb{E}}_{\substack{f\sim p_{\mathrm{train}}(\cdot)\\s\sim d^{\tilde{\mu}_f}(\cdot)}}\left[D_{\mathrm{TV}}(\tilde{\mu}_f\Vert\mu_f)[s]\right]-\frac{2\delta_{\mathrm{eval}}(1-Z)}{1-\gamma}\mathop{\mathbb{E}}_{\substack{f\sim p_{\mathrm{eval}}(\cdot)\\s\sim d^{\tilde{\mu}_f}(\cdot)}}\left[D_{\mathrm{TV}}(\tilde{\mu}_f\Vert\mu_f)[s]\right],\\
		\end{split}
	\end{equation}
	where $r_{\max}=\max_{s,a}\left\vert r(s,a)\right\vert$, $\delta_{\mathrm{train}}=\max_{f\in\mathcal{F}_{\mathrm{train}}}\left\{\max_{s,a}\left\vert A^{\mu_f}(s,a)\right\vert\right\}$, and $\delta_{\mathrm{eval}}=\max_{f\in\mathcal{F}_{\mathrm{eval}}}\left\{\max_{s,a}\left\vert A^{\mu_f}(s,a)\right\vert\right\}$.
\end{theorem}
\begin{proof}
	See Appendix \ref{proof 2}.
\end{proof}

\textbf{Explanation.}
Building upon Theorems \ref{theorem 1} and \ref{theorem 2}, we observe that, in contrast to the lower bound on training performance, the lower bound on generalization performance incorporates three additional terms, scaled by the common coefficient $(1-Z)$. This implies that increasing $Z$ contributes to improved generalization performance, with the special case of $Z=1$ resulting in alignment between generalization and training performance. Notably, this theoretical insight was also validated in Figure 2 of \citet{cobbe2020leveraging}.

However, once the training level is fixed (i.e., $\mathcal{F}_{\mathrm{train}}$), $Z$ is a constant, improving generalization performance requires constraining the following three terms:
\begin{equation}
	\underbrace{\mathop{\mathbb{E}}_{\substack{f\sim p_{\mathrm{train}}(\cdot)\\s\sim d^{\tilde{\mu}_f}(\cdot)}}\left[D_{\mathrm{TV}}(\tilde{\mu}_f\Vert\mu_f)[s]\right]}_{\text{denote it as }\mathfrak{D}_1},\enspace\underbrace{\mathop{\mathbb{E}}_{\substack{f\sim p_{\mathrm{eval}}(\cdot)\\s\sim d^{\tilde{\mu}_f}(\cdot)}}\left[D_{\mathrm{TV}}(\tilde{\mu}_f\Vert\mu_f)[s]\right]}_{\text{denote it as }\mathfrak{D}_2},\enspace\underbrace{\mathop{\mathbb{E}}_{\substack{f\sim p_{\mathrm{train}}(\cdot)\\s\sim d^{\mu_f}(\cdot)}}\left[D_{\mathrm{TV}}(\tilde{\mu}_f\Vert\mu_f)[s]\right]}_{\text{denote it as }\mathfrak{D}_{\mathrm{train}}}.
\end{equation}
During the training process, we can only empirically bound $\mathfrak{D}_{\mathrm{train}}$. Next, we establish the upper bounds of $\mathfrak{D}_1$ and $\mathfrak{D}_2$. Specifically, we propose the following theorem:
\begin{theorem}\label{theorem 3}
	Given any two policies, $\tilde{\pi}$ and $\pi$, the following bound holds:
	\begin{equation}
		\mathfrak{D}_1\leq\left(1+\frac{2\gamma\sigma_{\mathrm{train}}}{1-\gamma}\right)\mathfrak{D}_{\mathrm{train}},\enspace\mathfrak{D}_2\leq\left(1+\frac{2\gamma\sigma_{\mathrm{eval}}}{1-\gamma}\right)\underbrace{\mathop{\mathbb{E}}_{\substack{f\sim p_{\mathrm{eval}}(\cdot)\\s\sim d^{\mu_f}(\cdot)}}\left[D_{\mathrm{TV}}(\tilde{\mu}_f\Vert\mu_f)[s]\right]}_{\text{denote it as }\mathfrak{D}_{\mathrm{eval}}},
	\end{equation}
	where $\sigma_{\mathrm{train}}=\max_{f\in\mathcal{F}_{\mathrm{train}}}\left\{D_{\mathrm{TV}}^{\max}(\tilde{\mu}_f\Vert\mu_f)[s]\right\}$ and $\sigma_{\mathrm{eval}}=\max_{f\in\mathcal{F}_{\mathrm{eval}}}\left\{D_{\mathrm{TV}}^{\max}(\tilde{\mu}_f\Vert\mu_f)[s]\right\}$, $D_{\mathrm{TV}}^{\max}(\tilde{\mu}_f\Vert\mu_f)[s]$ is defined as $\max_{s}D_{\mathrm{TV}}(\tilde{\mu}_f\Vert\mu_f)[s]$.
\end{theorem}
\begin{proof}
	See Appendix \ref{proof 3}.
\end{proof}

The only problem now is finding the relationship between $\mathfrak{D}_{\mathrm{eval}}$ and $\mathfrak{D}_{\mathrm{train}}$. To achieve this, we would like to first introduce the following definition, which represents the policy robustness to irrelevant features.
\begin{definition}[$\mathcal{R}$-robust]\label{definition 1}
	We say that the policy $\pi$ is $\mathcal{R}$-robust if it satisfies
	\begin{equation}
		\sup_{s\in\mathcal{S},\tilde{f},f\in\mathcal{F}}D_{\mathrm{TV}}(\mu_{\tilde{f}}\Vert\mu_f)[s]=\mathcal{R}.
	\end{equation}
\end{definition}

\textbf{Explanation.} This definition demonstrates how the policy $\pi$ is influenced by two different rendering functions, $\tilde{f}$ and $f$, for any given underlying state $s$. If $\mathcal{R}=0$, it indicates that $D_{\mathrm{TV}}(\mu_{\tilde{f}}\Vert\mu_f)[s]\equiv0$, which means that the policy is no longer affected by any irrelevant features. 

Our intention in this definition is not to derive the tightest possible bound but rather to demonstrate how policy robustness to irrelevant features can contribute to improved generalization. Subsequently, leveraging Definition \ref{definition 1}, we establish an upper bound for $\mathfrak{D}_{\mathrm{eval}}$.
\begin{theorem}\label{theorem 5}
	Given any two policies, $\tilde{\pi}$ and $\pi$, assume that $\tilde{\pi}$ is $\mathcal{R}_{\tilde{\pi}}$-robust, and $\pi$ is $\mathcal{R}_{\pi}$-robust, then the following bound holds:
	\begin{equation}
		\mathfrak{D}_{\mathrm{eval}}\leq\left(1+\frac{2\gamma\sigma_{\mathrm{train}}}{1-\gamma}\right)\mathcal{R}_{\pi}+\mathcal{R}_{\tilde{\pi}}+\mathfrak{D}_{\mathrm{train}}.
	\end{equation}
\end{theorem}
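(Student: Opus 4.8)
The plan is to reduce the eval-side quantity $\mathfrak{D}_{\mathrm{eval}}$ to the train-side quantity $\mathfrak{D}_{\mathrm{train}}$ by inserting an arbitrary reference train rendering function and paying for the mismatch with the two robustness constants. First I would fix an eval function $f_e\in\mathcal{F}_{\mathrm{eval}}$ and a reference train function $f_t\in\mathcal{F}_{\mathrm{train}}$, and apply the triangle inequality for the total variation metric at every underlying state $s$:
\begin{equation*}
D_{\mathrm{TV}}(\tilde{\mu}_{f_e}\Vert\mu_{f_e})[s]\leq D_{\mathrm{TV}}(\tilde{\mu}_{f_e}\Vert\tilde{\mu}_{f_t})[s]+D_{\mathrm{TV}}(\tilde{\mu}_{f_t}\Vert\mu_{f_t})[s]+D_{\mathrm{TV}}(\mu_{f_t}\Vert\mu_{f_e})[s].
\end{equation*}
By Definition \ref{definition 1}, the first term is at most $\mathcal{R}_{\tilde{\pi}}$ and the third at most $\mathcal{R}_{\pi}$, uniformly in $s$, leaving only the genuinely train-side middle term.

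Next I would take the expectation over $s\sim d^{\mu_{f_e}}(\cdot)$. The obstacle is that the surviving middle term is still integrated against the eval visitation $d^{\mu_{f_e}}$, whereas $\mathfrak{D}_{\mathrm{train}}$ requires the train visitation $d^{\mu_{f_t}}$. To bridge this I would invoke the standard state-visitation perturbation bound: since $\mu_{f_e}$ and $\mu_{f_t}$ differ in total variation by at most $\mathcal{R}_{\pi}$ at every state (again Definition \ref{definition 1}), their normalized discounted visitation distributions satisfy $\lVert d^{\mu_{f_e}}-d^{\mu_{f_t}}\rVert_1\leq\frac{2\gamma}{1-\gamma}\mathcal{R}_{\pi}$. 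Combining this with the uniform estimate $D_{\mathrm{TV}}(\tilde{\mu}_{f_t}\Vert\mu_{f_t})[s]\leq\sigma_{\mathrm{train}}$ and the $\ell_1$--$\ell_\infty$ (Hölder) inequality yields
\begin{equation*}
\mathop{\mathbb{E}}_{s\sim d^{\mu_{f_e}}}\!\left[D_{\mathrm{TV}}(\tilde{\mu}_{f_t}\Vert\mu_{f_t})[s]\right]\leq\mathop{\mathbb{E}}_{s\sim d^{\mu_{f_t}}}\!\left[D_{\mathrm{TV}}(\tilde{\mu}_{f_t}\Vert\mu_{f_t})[s]\right]+\frac{2\gamma\sigma_{\mathrm{train}}}{1-\gamma}\mathcal{R}_{\pi}.
\end{equation*}

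Combining the two displays gives, for every $f_e$ and every $f_t$,
\begin{equation*}
\mathop{\mathbb{E}}_{s\sim d^{\mu_{f_e}}}\!\left[D_{\mathrm{TV}}(\tilde{\mu}_{f_e}\Vert\mu_{f_e})[s]\right]\leq\mathcal{R}_{\tilde{\pi}}+\left(1+\frac{2\gamma\sigma_{\mathrm{train}}}{1-\gamma}\right)\mathcal{R}_{\pi}+\mathop{\mathbb{E}}_{s\sim d^{\mu_{f_t}}}\!\left[D_{\mathrm{TV}}(\tilde{\mu}_{f_t}\Vert\mu_{f_t})[s]\right].
\end{equation*}
Because the left-hand side is independent of the reference $f_t$ while the inequality holds for every $f_t\in\mathcal{F}_{\mathrm{train}}$, I would take the expectation of the right-hand side over $f_t\sim p_{\mathrm{train}}$, which turns the trailing term into exactly $\mathfrak{D}_{\mathrm{train}}$, and then take the expectation over $f_e\sim p_{\mathrm{eval}}$, which turns the left-hand side into $\mathfrak{D}_{\mathrm{eval}}$; this produces the claimed bound. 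I expect the visitation-distribution change in the second step to be the crux: it is the only place where the dynamics of the underlying MDP enter, and it is precisely what forces $\mathcal{R}_{\pi}$ to be amplified by the factor $1+\frac{2\gamma\sigma_{\mathrm{train}}}{1-\gamma}$ rather than appearing alone, mirroring the mechanism already used in Theorem \ref{theorem 3}.
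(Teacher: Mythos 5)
Your proposal is correct and follows essentially the same route as the paper's proof: the same three-term triangle inequality through a reference train rendering function, the same use of Definition \ref{definition 1} to bound the two outer terms by $\mathcal{R}_{\tilde{\pi}}$ and $\mathcal{R}_{\pi}$, and the same change of visitation measure via Lemma \ref{lemma 2} that produces the $\frac{2\gamma\sigma_{\mathrm{train}}}{1-\gamma}\mathcal{R}_{\pi}$ correction. The only cosmetic difference is that you argue pointwise in $(f_e,f_t)$ and average at the end, whereas the paper takes the expectation over $f\sim p_{\mathrm{train}}$ earlier; the two orderings are equivalent.
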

\begin{proof}
	See Appendix \ref{proof 5}.
\end{proof}

Altogether, by combining Theorems \ref{theorem 2}, \ref{theorem 3}, and \ref{theorem 5}, we can derive the following corollary:
\begin{corollary}
	Given any two policies, $\tilde{\pi}$ and $\pi$, the following bound holds:
	\begin{equation}
		\zeta(\tilde{\pi})\geq L_{\pi}(\tilde{\pi})-C_{\mathrm{train}}\mathfrak{D}_{\mathrm{train}}-C_{\pi}\mathcal{R}_{\pi}-C_{\tilde{\pi}}\mathcal{R}_{\tilde{\pi}}-C,
	\end{equation}
	where
	\begin{equation}
		\begin{split}
			C_{\mathrm{train}}&=\frac{2\delta_{\mathrm{train}}(1-Z)}{1-\gamma}\left(1+\frac{2\gamma\sigma_{\mathrm{train}}}{1-\gamma}\right)+\frac{2\delta_{\mathrm{eval}}(1-Z)}{1-\gamma}\left(1+\frac{2\gamma\sigma_{\mathrm{eval}}}{1-\gamma}\right)+\frac{2\gamma\epsilon_{\mathrm{train}}}{(1-\gamma)^2},\\
			C_{\pi}&=\frac{2\delta_{\mathrm{eval}}(1-Z)}{1-\gamma}\left(1+\frac{2\gamma\sigma_{\mathrm{eval}}}{1-\gamma}\right)\left(1+\frac{2\gamma\sigma_{\mathrm{train}}}{1-\gamma}\right),   \\
			C_{\tilde{\pi}}&=\frac{2\delta_{\mathrm{eval}}(1-Z)}{1-\gamma}\left(1+\frac{2\gamma\sigma_{\mathrm{eval}}}{1-\gamma}\right),\enspace C=\frac{2r_{\max}(1-Z)}{1-\gamma}.\\
		\end{split}
	\end{equation}
\end{corollary}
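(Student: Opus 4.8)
The plan is to prove the corollary by pure substitution: I would chain the three upper bounds of Theorems \ref{theorem 3}, \ref{theorem 4}, and \ref{theorem 5} into the generalization lower bound of Theorem \ref{theorem 2}, and then collect the resulting terms by the quantities $\mathfrak{D}_{\mathrm{train}}$, $\mathcal{R}_{\pi}$, and $\mathcal{R}_{\tilde{\pi}}$. The one conceptual point that legitimizes the whole chaining is a sign observation: in Theorem \ref{theorem 2} the divergences $\mathfrak{D}_{\mathrm{train}}$, $\mathfrak{D}_1$, and $\mathfrak{D}_2$ all enter with \emph{negative} coefficients. Consequently, an upper bound $\mathfrak{D}_i\leq U_i$ yields, after multiplication by a nonnegative prefactor $c_i\geq 0$ and negation, the valid inequality $-c_i\mathfrak{D}_i\geq -c_iU_i$, which only weakens the lower bound on $\zeta(\tilde{\pi})$. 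So every ``$\leq$'' bound on a $\mathfrak{D}$ term is exactly what is needed to preserve the direction of the generalization inequality.

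Concretely, I would proceed in the following order. First, rewrite Theorem \ref{theorem 2} in the shorthand notation, isolating the constant $C=\frac{2r_{\max}(1-Z)}{1-\gamma}$ and leaving the three terms $-\frac{2\gamma\epsilon_{\mathrm{train}}}{(1-\gamma)^2}\mathfrak{D}_{\mathrm{train}}$, $-\frac{2\delta_{\mathrm{train}}(1-Z)}{1-\gamma}\mathfrak{D}_1$, and $-\frac{2\delta_{\mathrm{eval}}(1-Z)}{1-\gamma}\mathfrak{D}_2$. Second, substitute the Theorem \ref{theorem 3} bound $\mathfrak{D}_1\leq\bigl(1+\tfrac{2\gamma\sigma_{\mathrm{train}}}{1-\gamma}\bigr)\mathfrak{D}_{\mathrm{train}}$ into the $\mathfrak{D}_1$ term and the Theorem \ref{theorem 4} bound $\mathfrak{D}_2\leq\bigl(1+\tfrac{2\gamma\sigma_{\mathrm{eval}}}{1-\gamma}\bigr)\mathfrak{D}_{\mathrm{eval}}$ into the $\mathfrak{D}_2$ term. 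Third, eliminate the remaining $\mathfrak{D}_{\mathrm{eval}}$ using Theorem \ref{theorem 5}, namely $\mathfrak{D}_{\mathrm{eval}}\leq\bigl(1+\tfrac{2\gamma\sigma_{\mathrm{train}}}{1-\gamma}\bigr)\mathcal{R}_{\pi}+\mathcal{R}_{\tilde{\pi}}+\mathfrak{D}_{\mathrm{train}}$. Finally, gather all contributions proportional to $\mathfrak{D}_{\mathrm{train}}$, to $\mathcal{R}_{\pi}$, and to $\mathcal{R}_{\tilde{\pi}}$, reading off $C_{\mathrm{train}}$, $C_{\pi}$, and $C_{\tilde{\pi}}$ respectively.

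I expect the main obstacle to be nothing deep but rather the bookkeeping in assembling $C_{\mathrm{train}}$: the quantity $\mathfrak{D}_{\mathrm{train}}$ appears three separate times in the final expression—once directly from Theorem \ref{theorem 2}, once through the Theorem \ref{theorem 3} substitution into the $\mathfrak{D}_1$ term, and once through the $\mathfrak{D}_{\mathrm{train}}$ summand injected by Theorem \ref{theorem 5} inside the $\mathfrak{D}_2$ term. Tracking that the latter carries the full prefactor $\frac{2\delta_{\mathrm{eval}}(1-Z)}{1-\gamma}\bigl(1+\tfrac{2\gamma\sigma_{\mathrm{eval}}}{1-\gamma}\bigr)$ is what produces the third summand of $C_{\mathrm{train}}$, while the nested prefactor $\bigl(1+\tfrac{2\gamma\sigma_{\mathrm{train}}}{1-\gamma}\bigr)$ attaches only to $\mathcal{R}_{\pi}$ and gives $C_{\pi}$. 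The last step I would perform is to confirm that every prefactor introduced along the way is nonnegative—each is a product of $\delta$, $\sigma$, $\gamma\in(0,1)$, $(1-Z)\in[0,1]$, and $r_{\max}$, all of which are nonnegative—so that each upper-bound substitution indeed respects the inequality direction and the final bound on $\zeta(\tilde{\pi})$ is valid.
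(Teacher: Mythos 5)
Your proposal is correct and is essentially the paper's own argument: the corollary is stated as following directly from chaining Theorems \ref{theorem 2}, \ref{theorem 3}, \ref{theorem 4}, and \ref{theorem 5}, which is exactly the substitution-and-collection you describe, and your bookkeeping (the three contributions to $C_{\mathrm{train}}$, the nested prefactor on $\mathcal{R}_{\pi}$, and the sign/nonnegativity check that justifies substituting upper bounds into negatively-weighted terms) reproduces the stated constants exactly.
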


\begin{figure}[!t]
	\centering
	\includegraphics[scale=0.55]{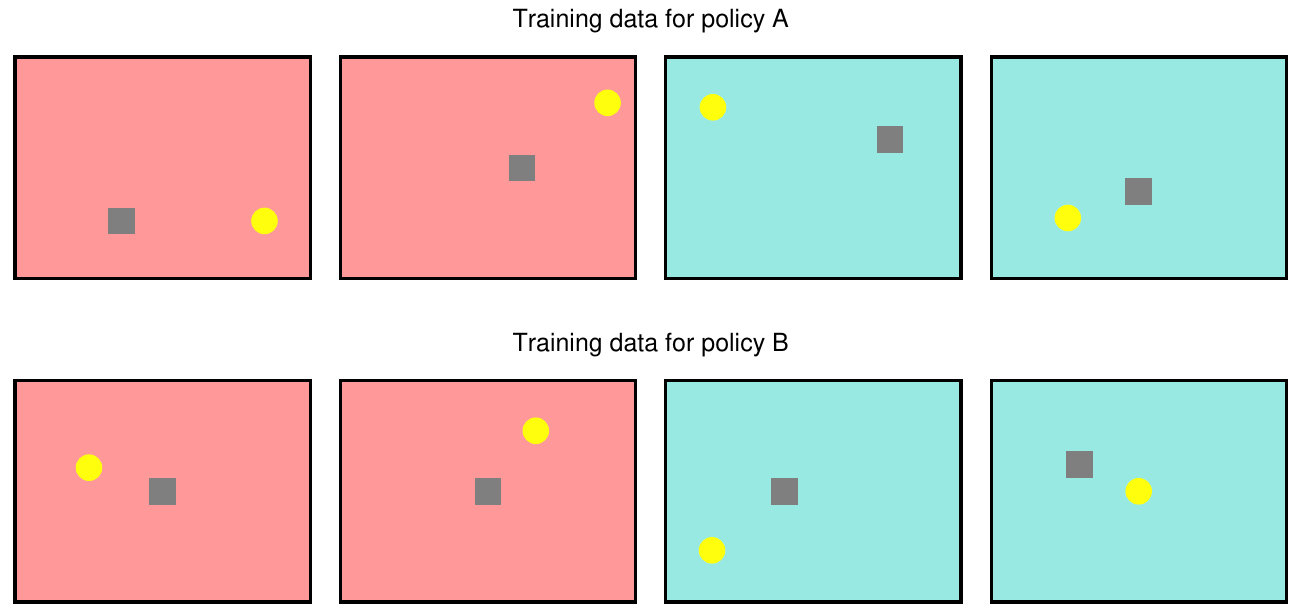}
	\caption{This is a toy environment where the gray agent's goal is to pick up coins.}
	\label{example}
\end{figure}

\textbf{Explanation.}
This represents our central theoretical result, demonstrating that enhancing generalization performance requires not only minimizing $\mathfrak{D}_{\mathrm{train}}$ during training but also improving policy robustness to irrelevant features, specifically by reducing $\mathcal{R}_{\pi}$ and $\mathcal{R}_{\tilde{\pi}}$. Furthermore, we emphasize that these results rely solely on the mild Assumptions \ref{assumption 1}, \ref{assumption 2}, and \ref{assumption 3}. Consequently, this constitutes a novel contribution that is broadly applicable to a wide range of algorithms.

\section{Distillation as Regularization}
Despite the theoretical advancements, in typical generalization settings, both the underlying MDP and the rendering function remain unknown. Next, we begin by introducing a minimal toy example in Section \ref{Toy Example}, which we then provide an in-depth analyze in Section \ref{Central Hypothesis} to motivate our hypothesis.

\subsection{Toy Example}\label{Toy Example}
Let's consider a simple environment where the agent attempts to pick up coins to earn rewards (see Figure \ref{example}). The agent's observations are the current pixels. It is clear that the agent's true objective is to pick up the coins, and the background color is a spurious feature. However, upon observing the training data for policy A, we can see that in the red background, the coins are always on the right side of the agent, while in the cyan background, the coins are always on the left side. As a result, when training policy A using reinforcement learning algorithms, it is likely to exhibit overfitting behavior, such as moving to the right in a red background and to the left in a cyan background.

However, the overfitting of policy A to the background color will fail in the training data of policy B, because in policy B's training data, regardless of whether the background color is red or cyan, the coin can appear either on the left or right side of the agent. Therefore, through DML, policy A is regularized by the behavior of policy B, effectively preventing policy A from overfitting to the background color. In other words, any irrelevant features learned by policy A could lead to suboptimal performance of policy B, and vice versa. Thus, we hypothesize that this process will force both policies to learn the true underlying semantics, ultimately improving generalization performance.

\begin{figure}[!t]
	\centering
	\includegraphics[scale=0.45]{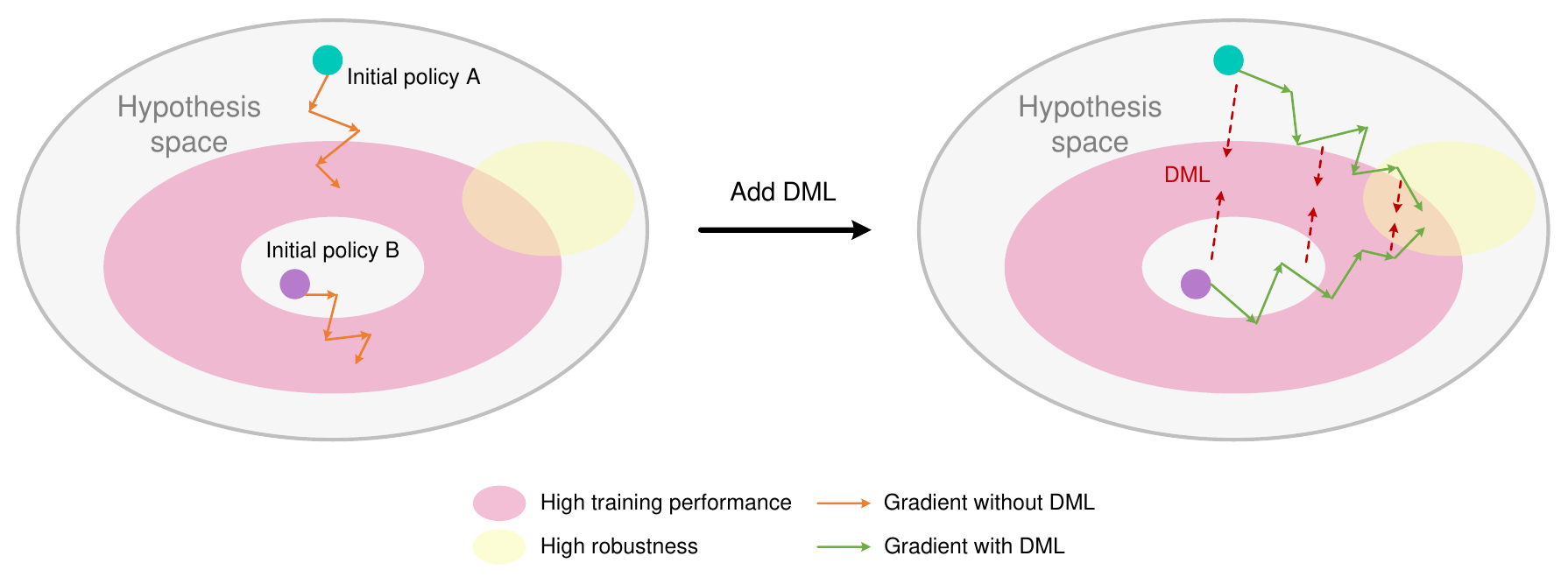}
	\caption{(Left) Independently trained reinforcement learning policies may overfit to irrelevant features. (Right) Through mutual distillation via DML, two policies regularize each other to converge toward a more robust hypothesis space, ultimately improving generalization performance.}\label{RL+DML}
\end{figure}

\subsection{Hypothesis}\label{Central Hypothesis}
Motivated by Section \ref{Toy Example}, DML can be viewed as a form of implicit regularization against irrelevant features, as demonstrated in Figure \ref{RL+DML}, which illustrates two randomly initialized policies independently trained using reinforcement learning algorithms. In this case, since the training samples only include a portion of all possible MDPs, the policies are likely to overfit to irrelevant features and fail to converge to a robust hypothesis space.

Applying DML to the training process of both policies facilitates mutual learning, which can mitigate overfitting to irrelevant features. Due to the randomness of parameter initialization and the interaction process, they generate different training samples, DML encourages both policies to make consistent decisions based on the same observations. As discussed in Section \ref{Toy Example}, any irrelevant features learned by policy A are likely to degrade the performance of policy B, and vice versa. As training progresses, DML will drive both policies to learn more meaningful and useful representations, gradually reducing the divergence between them. Ideally, we hypothesize that both policies will capture the essential aspects of high-dimensional observations as time grows.

\section{Experiments}
This section presents our main empirical results. Section \ref{Implementation Details} introduces the implementation details, Section \ref{Empirical Results} validates the effectiveness of DML technique for improving generalization performance, Section \ref{Robustness Testing} verifies our central hypothesis, and Section \ref{Ablation Study} confirms our theoretical results.

\subsection{Implementation Details}\label{Implementation Details}
We use Procgen \citep{cobbe2019quantifying, cobbe2020leveraging} as the experimental benchmark for testing generalization performance. Procgen is a suite of 16 procedurally generated game-like environments designed to benchmark both sample efficiency and generalization in reinforcement learning, and it has been widely used to test the generalization performance of various reinforcement learning algorithms \citep{wang2020improving, raileanu2021decoupling, raileanu2021automatic, lyle2022learning, rahman2023adversarial, jesson2024improving}.

We employ the Proximal Policy Optimization ({\color{myblue}PPO}) \citep{schulman2017proximal, cobbe2020leveraging} as our baseline. Specifically, given a parameterized policy $\pi_{\theta}$ ($\theta$ represents the parameters), the objective of $\pi_{\theta}$ is to maximize
\begin{equation}
	J(\theta)=\mathop{\mathbb{E}}_{(o_t,a_t)\sim\pi_{\theta_{\mathrm{old}}}}\left\{\min\left[r_t(\theta)\cdot\hat{A}(o_t,a_t),\mathrm{clip}\left(r_t(\theta),1-\epsilon,1+\epsilon\right)\cdot\hat{A}(o_t,a_t)\right]\right\},
\end{equation}
where $\hat{A}$ is the advantage estimate, and $r_t(\theta) = \pi_{\theta}(a_t|o_t) / \pi_{\theta_{\mathrm{old}}}(a_t|o_t)$ is the probability ratio, where $\pi_{\theta_{\mathrm{old}}}$ and $\pi_{\theta}$ denote the old and current policies, respectively.

We randomly initialize two agents to interact with the environment and collect data separately. Similar to the DML loss (\ref{dml loss}) used in supervised learning, we also introduce an additional KL divergence loss term, which leads to
\begin{equation}
	\mathcal{L}_{\mathrm{DML}}=\mathcal{L}_{\mathrm{RL}}+\alpha \mathcal{L}_{\mathrm{KL}},
\end{equation}
where $\mathcal{L}_{\mathrm{RL}}$ is the reinforcement learning loss and $\mathcal{L}_{\mathrm{KL}}$ is the KL divergence loss, $\alpha$ is the weight. And then we optimize the total loss of both agents, which is the average of their DML losses, as shown in Algorithm \ref{PPO with DML}, which we name Mutual Distillation Policy Optimization ({\color{myred}MDPO}).

\begin{algorithm}[!h]
	\caption{Mutual Distillation Policy Optimization ({\color{myred}MDPO})}
	\label{PPO with DML}
	\begin{algorithmic}[1]
		\STATE {\bfseries Initialize:} Two agents $\pi_1,\pi_2$, PPO algorithm $\mathcal{A}$, KL divergence weight $\alpha$
		\WHILE{training}
		\FOR{$i=1,2$}
		\STATE Collect training data: $\mathcal{D}_i\sim\pi_i$
		\STATE Compute RL loss: $\mathcal{L}_{\mathrm{RL}}^{(i)}\leftarrow\mathcal{A}(\mathcal{D}_i)$
		\STATE Compute KL loss: $\mathcal{L}_{\mathrm{KL}}^{(i)}\leftarrow D_{\mathrm{KL}}(\pi_{3-i}\Vert\pi_i)$
		\STATE\color{myred} Compute DML loss: $\mathcal{L}_{\mathrm{DML}}^{(i)}\leftarrow\mathcal{L}_{\mathrm{RL}}^{(i)}+\alpha \mathcal{L}_{\mathrm{KL}}^{(i)}$
		\ENDFOR
		\STATE Compute total loss: $\mathcal{L}\leftarrow\frac{1}{2}\left(\mathcal{L}_{\mathrm{DML}}^{(1)}+\mathcal{L}_{\mathrm{DML}}^{(2)}\right)$
		\STATE Optimize $\mathcal{L}$ using gradient descent algorithm
		\ENDWHILE
	\end{algorithmic}
\end{algorithm}

Ultimately, we do not claim to achieve state-of-the-art (SOTA) performance, but rather provide empirical evidence for the non-trivial insight that DML serves as an implicit regularization against irrelevant features, leading to consistent improvements in generalization performance. We also acknowledge the methodological similarities with prior work such as \citet{zhao2021robust}; despite that, we introduce \textit{representation convergence} (Section \ref{Central Hypothesis}), a novel insight with further supported by strong theoretical analysis (Section \ref{Theoretical Results}), constituting our additional contributions.

\subsection{Empirical Results}\label{Empirical Results}
We compare the generalization performance of our {\color{myred}MDPO} against the {\color{myblue}PPO} baseline on the Procgen benchmark, under the hard-level settings \citep{cobbe2020leveraging}, the results are illustrated in Figure \ref{generalization performance}. It can be observed that DML technique indeed leads to consistent improvements in generalization performance across all environments. Notably, for the bigfish, dodgeball, and fruitbot environments, we have observed significant improvements. Moreover, the full experimental results for all environments, including training and generalization performance, are provided in Appendix \ref{More Experimental Results}.

A natural concern arises: how can we determine whether DML improves generalization performance by enhancing the policy robustness against irrelevant features, or simply due to the additional information sharing between these two agents during training (each agent receives additional information than it would from training alone)? To answer this question, we conducted robustness testing in Section \ref{Robustness Testing} and added an ablation study in Section \ref{Ablation Study} to support our theory and hypothesis.

\begin{figure}[!t]
	\centering
	\includegraphics[width=\textwidth]{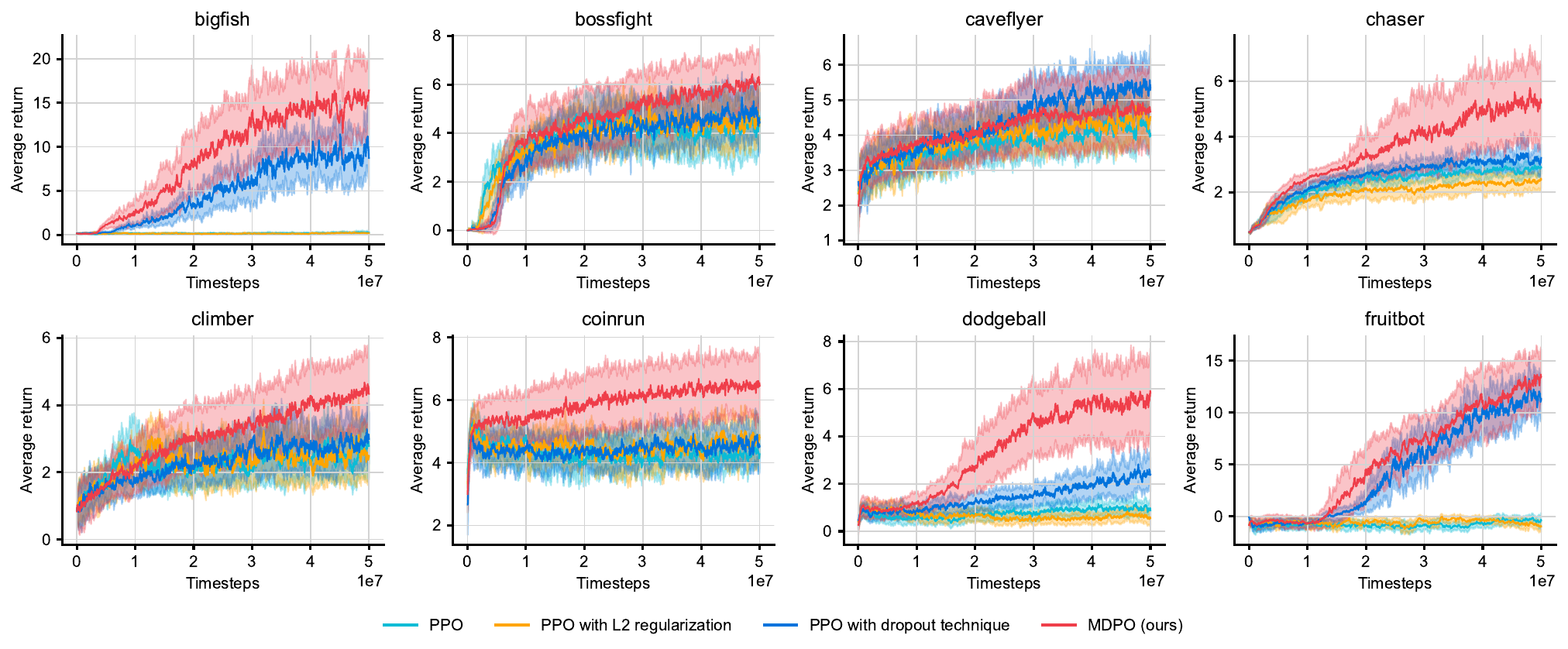}
	\caption{Generalization performance from 500 levels in Procgen benchmark with different methods. Our {\color{myred}MDPO} gains significant performance improvement compared with the baseline algorithms.}\label{generalization performance}
\end{figure}

\begin{figure}[!t]
	\centering
	\includegraphics[scale=0.38]{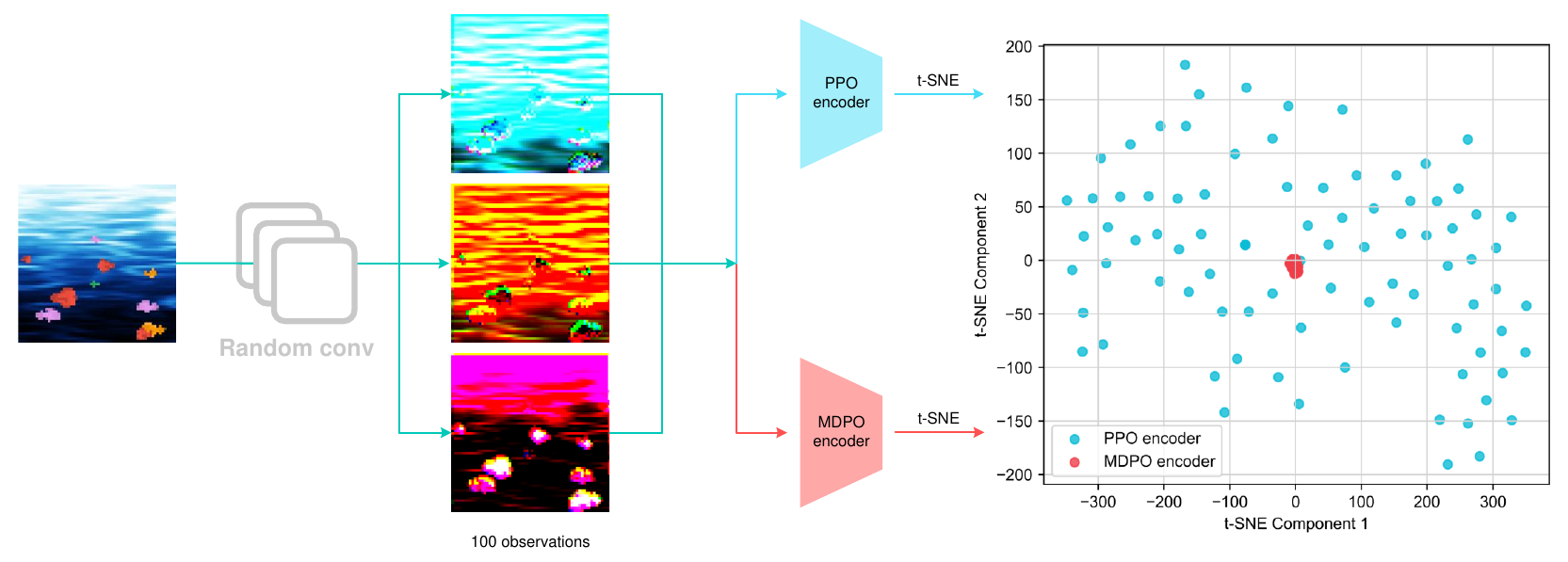}
	\caption{To test the robustness of the trained policy, we obfuscate the agent's observations using convolutional layers randomly initialized with a standard Gaussian distribution.}
	\label{random conv}
\end{figure}

\begin{wrapfigure}{r}{0.4\textwidth}
	\centering
	\tiny
	\begin{tabular}{c|cccc}
		\toprule
		Algo\textbackslash Env & bigfish & bossfight & caveflyer & chaser \\
		\midrule
		{\color{myblue}PPO} & 10.65 & 10.88 & 10.35 & 16.51\\
		{\color{myred}MDPO} & \bf0.99 & \bf0.94 & \bf0.97 & \bf1.00\\
		\bottomrule
		\toprule
		Algo\textbackslash Env & fruitbot & jumper & miner & ninja \\
		\midrule
		{\color{myblue}PPO} & 9.93 & 14.88 & 18.32 & 9.73\\
		{\color{myred}MDPO} & \bf0.99 & \bf0.99 & \bf0.98 & \bf1.00\\
		\bottomrule
	\end{tabular}
	\captionof{table}{A simple practical measure of $\mathcal{R}$-robustness defined in Definition \ref{definition 1}.}\label{robust}
\end{wrapfigure}

\subsection{Robustness Testing}\label{Robustness Testing}
We design a novel approach to test policy robustness against irrelevant features. For a given frame, we generate \textit{adversarial samples} using random CNNs initialized with a standard Gaussian distribution, as shown in Figure \ref{random conv}. Notably, the feature extraction of {\color{myred}MDPO} encoder is highly stable and focused ({\color{myred}red points}), whereas the features extracted by the original {\color{myblue}PPO} encoder are significantly dispersed ({\color{myblue}blue points}).

Moreover, we design a practical measure of $\mathcal{R}$-robustness defined in Definition \ref{definition 1}. Specifically, for each environment, we run the trained policy ({\color{myblue}PPO} and {\color{myred}MDPO}) in the environment for 100 steps and obtain observations $o_1,o_2,\dots,o_{100}$. Then, for each $o_i$ we use 100 random CNNs to simulate rendering function samples $f_1^{(i)},f_2^{(i)},\dots,f_{100}^{(i)}$ and compute the TV divergence of the policy between the adversarial samples and the original observations, i.e., $D_{\mathrm{TV}}(\pi_{\theta}(\cdot|o_i)\Vert\pi_{\theta}(\cdot|f_{j}^{(i)}(o_i)))$, where $i,j=1,2,\dots,100$. We then take the maximum of these values as a simple practical measure of $\mathcal{R}$-robustness:
\begin{equation}
	\hat{\mathcal{R}}:=\max_{i,j}D_{\mathrm{TV}}(\pi_{\theta}(\cdot|o_i)\Vert\pi_{\theta}(\cdot|f_{j}^{(i)}(o_i))),
\end{equation}
the results are shown in Table \ref{robust}. We can see that {\color{myred}MDPO} achieves a significantly lower $\hat{\mathcal{R}}$ than {\color{myblue}PPO}, showing that DML effectively improves the policy robustness to irrelevant features, which serves as further strong evidence for our hypothesis.

\subsection{Ablation Study}\label{Ablation Study}
\begin{wrapfigure}{r}{0.55\textwidth}
	\centering
	\tiny
	\begin{tabular}{c|cccc}
		\toprule
		Algo\textbackslash Env & bigfish & chaser & dodgeball & fruitbot \\
		\midrule
		PPO ({\color{myblue}PPO} encoder) & 0.19$^{\pm\text{0.14}}$ & 2.57$^{\pm\text{0.28}}$ & 0.71$^{\pm\text{0.34}}$ & -0.39$^{\pm\text{0.46}}$\\
		PPO ({\color{myred}MDPO} encoder) & {\bf22.67}$^{\pm\text{6.40}}$ & {\bf6.22}$^{\pm\text{1.36}}$ & {\bf4.70}$^{\pm\text{1.91}}$ & {\bf11.22}$^{\pm\text{2.16}}$\\
		\bottomrule
	\end{tabular}
	\captionof{table}{Generalization performance of PPO linear probe on top of the \textit{frozen} encoders.}\label{frozen}
	\begin{tabular}{c|cccc}
		\toprule
		$\alpha$ in {\color{myred}MDPO}\textbackslash Env & bigfish & chaser & dodgeball & fruitbot \\
		\midrule
		0 (baseline) & 0.26$^{\pm\text{0.23}}$ & 0.92$^{\pm\text{0.46}}$ & -0.50$^{\pm\text{0.81}}$ &3.99$^{\pm\text{0.21}}$ \\
		0.1 & 9.87$^{\pm\text{4.57}}$ & 4.35$^{\pm\text{1.63}}$ & 11.94$^{\pm\text{2.96}}$ &10.97$^{\pm\text{2.72}}$ \\
		1 & {\bf16.11}$^{\pm\text{4.63}}$ & {\bf5.66}$^{\pm\text{1.98}}$ & {\bf13.23}$^{\pm\text{3.04}}$ &{\bf11.28}$^{\pm\text{3.04}}$ \\
		10 & {7.69}$^{\pm\text{3.65}}$ & {4.35}$^{\pm\text{1.48}}$ & {2.31}$^{\pm\text{2.41}}$ &{8.54}$^{\pm\text{2.27}}$ \\
		\bottomrule
	\end{tabular}
	\captionof{table}{Generalization performance of MDPO under different KL divergence weights.}\label{sensitivity analysis}
\end{wrapfigure}
We design additional ablation experiments. Specifically, we \textit{double} the model size, batch size, and total number of interactions for the PPO baseline, as shown in Figure \ref{double}. It can be seen that PPO baseline still fails to match the performance of {\color{myred}MDPO}, demonstrating that naively scaling up the {\color{myblue}PPO} baseline does not lead to stable improvements in generalization performance. 

Furthermore, we retrain a PPO \textit{linear probe} on top of the \textit{frozen} encoders of the trained {\color{myblue}PPO} and {\color{myred}MDPO} policies, training for only 1M steps (2\% of the original training steps), the final generalization performance during the last 10\% steps is shown in Table \ref{frozen}. It can be seen that the PPO linear probe trained on the {\color{myred}MDPO} encoder achieves significantly better generalization performance, indicating that DML helps the policy learn better (more robust) representations. Moreover, we add a sensitivity analysis of the KL divergence weight $\alpha$, and the results are presented in Table \ref{sensitivity analysis}.

\begin{figure}[!t]
	\centering
	\includegraphics[width=\textwidth]{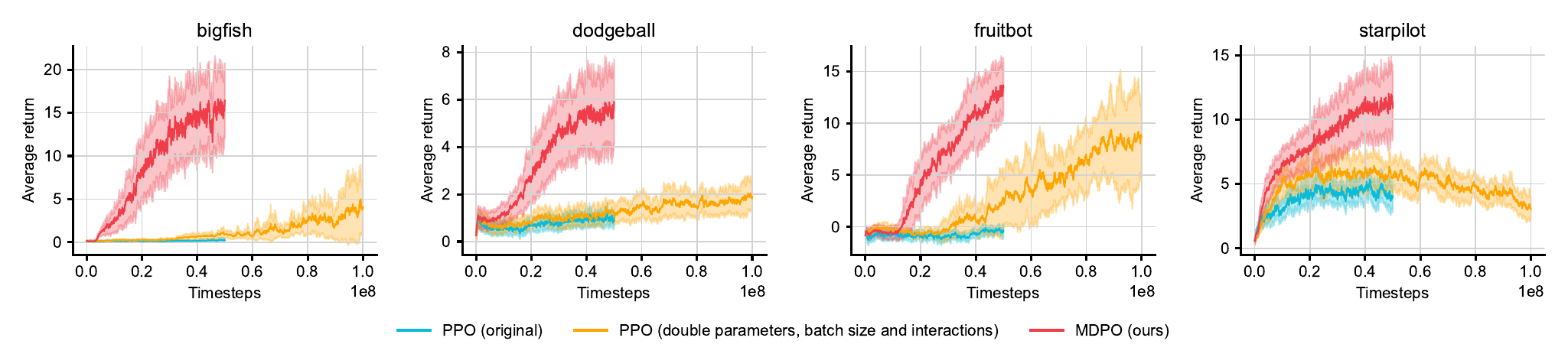}
	\caption{Generalization performance of {\color{orange}PPO baseline with double model size, batch size, and total number of interactions}, compared to original {\color{myblue}PPO} and {\color{myred}MDPO} (for training results, see Figure \ref{double train}).}\label{double}
\end{figure}

\section{Conclusion}
In this paper, we provide a novel theoretical framework to explain the generalization problem in deep reinforcement learning. We further hypothesize that DML, as a form of implicit regularization, effectively prevents the policy from overfitting to irrelevant features. Strong empirical results support our central theory and hypothesis, demonstrating that our approach can improve the generalization performance of reinforcement learning systems by enhancing robustness against irrelevant features. Our work provides valuable insights and elegant solutions into the development of more adaptable and robust policies capable of generalizing across diverse environments.


%

\bibliography{iclr2026_conference}
\bibliographystyle{iclr2026_conference}

\appendix
\newpage

\section{Related Work}
\textbf{The generalization of deep reinforcement learning} has been widely studied, and previous work has pointed out the overfitting problem in deep reinforcement learning \citep{rajeswaran2017towards, zhang2018study, justesen2018illuminating, packer2018assessing, song2019observational, cobbe2019quantifying, grigsby2020measuring, cobbe2020leveraging, yuan2023rl}. A natural approach to avoid the overfitting problem in deep reinforcement learning is to apply regularization techniques originally developed for supervised learning such as dropout \citep{srivastava2014dropout, farebrother2018generalization, igl2019generalization}, data augmentation \citep{laskin2020reinforcement, yarats2021image, zhang2021generalization, raileanu2021automatic, ma2022comprehensive}, domain randomization \citep{tobin2017domain, yue2019domain, slaoui2019robust, mehta2020active}, or network randomization technique \citep{lee2019network}. On the other hand, in order to improve sample efficiency, previous studies encouraged the policy network and value network to share parameters \citep{schulman2017proximal, huang2022cleanrl}. However, recent works have explored the idea of decoupling the two and proposed additional distillation strategies \citep{cobbe2021phasic, raileanu2021decoupling, moon2022rethinking}. In particular, \citet{raileanu2021decoupling} demonstrated that more information is needed to accurately estimate the value function, which can lead to overfitting.

\textbf{Knowledge distillation} is a learning paradigm that aims to align the student network with the teacher network to achieve knowledge transfer. A commonly used practice is to distill the knowledge learned by a large model into a smaller model to reduce inference costs after deployment \citep{xu2024survey}. On the other hand, distillation technique can also be used to distill a model with privileged information into a model with access to only partial information to improve its generalization ability. However, research has shown that knowledge distillation can also be applied to multiple student networks during training to encourage them to learn from each other, called deep mutual learning (DML) \citep{zhang2018deep}. Building upon this observation, \citet{zhao2021robust} further demonstrate that DML can improve the generalization performance of reinforcement learning agents, yet no in-depth analysis of why this happens. In addition, recent studies suggest that aligning the student networks at the output layer may be suboptimal, and recommend alignment at the logits layer instead \citep{deckers2024twin, vandersmissenimproving}.

\section{Hyperparameters} \label{Hyperparameters}
Table \ref{hyperparameters} shows the detailed hyperparameter settings in our code, with the main hyperparameters consistent with the hard-level settings in \citet{cobbe2020leveraging}, except that we train for 50M steps instead of 200M. We train the policy on the initial 500 levels and then test its generalization performance across the full distribution of levels.
\begin{table}[!h]
	\scriptsize
	\centering
	\caption{Detailed hyperparameters in Procgen.}
	\label{hyperparameters}
	\begin{tabular}{c|cc}
		\toprule
		Hyperparameter\textbackslash Algorithm & PPO \citep{schulman2017proximal} & MDPO (ours) \\
		\midrule
		Number of workers & 64 & 64  \\
		Horizon & 256& 256 \\
		Learning rate& 0.0005 & 0.0005  \\
		Learning rate decay & No& No \\
		Optimizer & Adam & Adam  \\
		Total interaction steps & 50M & 50M  \\
		Update epochs &3  &3 \\
		Mini-batches & 8  & 8 \\
		Batch size & 16384 & 16384  \\
		Mini-batch size &  2048 &  2048 \\
		Discount factor $\gamma$  & 0.999 & 0.999\\
		GAE parameter $\lambda$  & 0.95  & 0.95\\
		Value loss coefficient $c_1$ & 0.5 & 0.5 \\
		Entropy loss coefficient $c_2$ & 0.01 & 0.01 \\
		Clipping parameter $\epsilon$ & 0.2 & 0.2 \\
		KL divergence weight $\alpha$ & - & 1.0 \\
		\bottomrule
	\end{tabular}
\end{table}

\newpage
\section{Philosophical Insight}
An intriguing analogy for our hypothesis is the process of truth emergence. Typically, each scholar offers their unique perspective, but for it to be widely accepted, it must garner consensus from peers within the field, or even from the broader academic community. We can draw a parallel between DML and the peer review process: when a particular viewpoint is accepted by the majority, it is more likely to reflect an objective truth. Going deeper, our hypothesis aligns with the philosophical concept of \textit{convergent realism} \citep{laudan1981confutation, kelly1989convergence, huh2024platonic}, which posits that science progresses towards an objective truth. 

In the \textit{Allegory of the Cave} \citep{cohen2006the}, Plato describes a group of people who have been chained in a cave from birth, facing the wall. They can only see the shadows cast on the wall by objects behind them, illuminated by a fire. These shadows are all they know and perceive as reality. One day, a prisoner is freed and exits the cave. Upon seeing the world outside, he discovers that the true reality is entirely different from the shadows, and he realizes that what they thought was ``real'' was only an illusion.

The Allegory of the Cave explores the concept of different levels of knowledge. From a metaphysical standpoint, Plato is distinguishing between the ``appearance'' and the ``reality''. The shadows in the cave represent the sensory world, which is just an illusion, while the outside world symbolizes the realm of true reality, which can only be apprehended through rational thought and philosophical inquiry. Plato argues that what we perceive with our senses is not the true essence of reality, what he called the \textit{ideal reality}, but rather a mere shadow of the higher, eternal truths.

However, even though the freed prisoner has seen the world outside the cave, how can he be certain that what he now perceives is not a shadow of something even more fundamental? This is exactly \textit{skepticism}, the philosophical stance that questions the possibility of certain knowledge. It suggests that every layer of perceived reality might itself be an illusion, prompting thinkers from Descartes onward to doubt not only the senses but even the existence of the external world, demanding ever more rigorous foundations for truth. To counter skepticism, some philosophers have found it necessary to expand the definition of ``reality'' and propose the \textit{virtual realism} \citep{chalmers2022reality+}.

\begin{wrapfigure}{r}{0cm}
	\centering
	\includegraphics[scale=0.25]{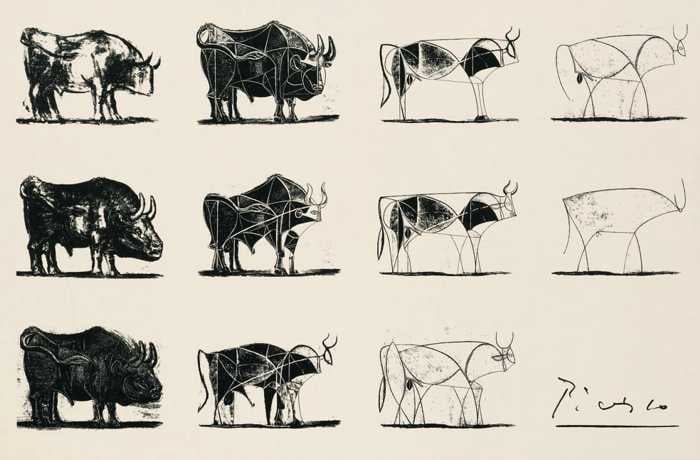}
	\caption{Picasso's \textit{The Bull} \citep{drawpaintacademy}. By focusing on and exaggerating specific details, rather than trying to capture every detail realistically, artists can convey the core meaning or essence of the subject, more powerfully.}\label{abstraction}
\end{wrapfigure}

This paper offers another powerful perspective against skepticism: if we regard the \textit{underlying state} $s_t$ as the essence of things, and the \textit{rendering function} $f$ as the projection (possibly a composition of multiple layers of projections), then our perception of the world becomes an observation $o_t = f(s_t)$. By encouraging agents to make consistent decisions based on the same observations, our method fosters a process akin to \textit{cognitive alignment} \citep{falandays2022emergence}, which has been fundamental in human societal development. For instance, in voting, the majority rule is employed because decisions supported by the majority are perceived as more reliable. Similarly, our method facilitates cognitive alignment between agents, enabling them to converge on objective truths despite noisy or irrelevant features in their observations. Over time, the cognitive alignment between agents encourages the convergence of their individual representations toward a more accurate understanding of the environment.

It is worth noting that metaphysical ideas can also be observed in certain works of art, such as Picasso's \textit{The Bull} (Figure \ref{abstraction}). Most people think that art is all about seeing more detail, but it is really about seeing less. Seeing basic patterns amongst the ``noise''; seeing basic forms amongst the complex; seeing the few important details which convey the majority of meaning.

\newpage
\section{More Results}\label{More Experimental Results}
\subsection{Full Results}
\begin{figure}[!h]
	\centering
	\includegraphics[scale=0.34]{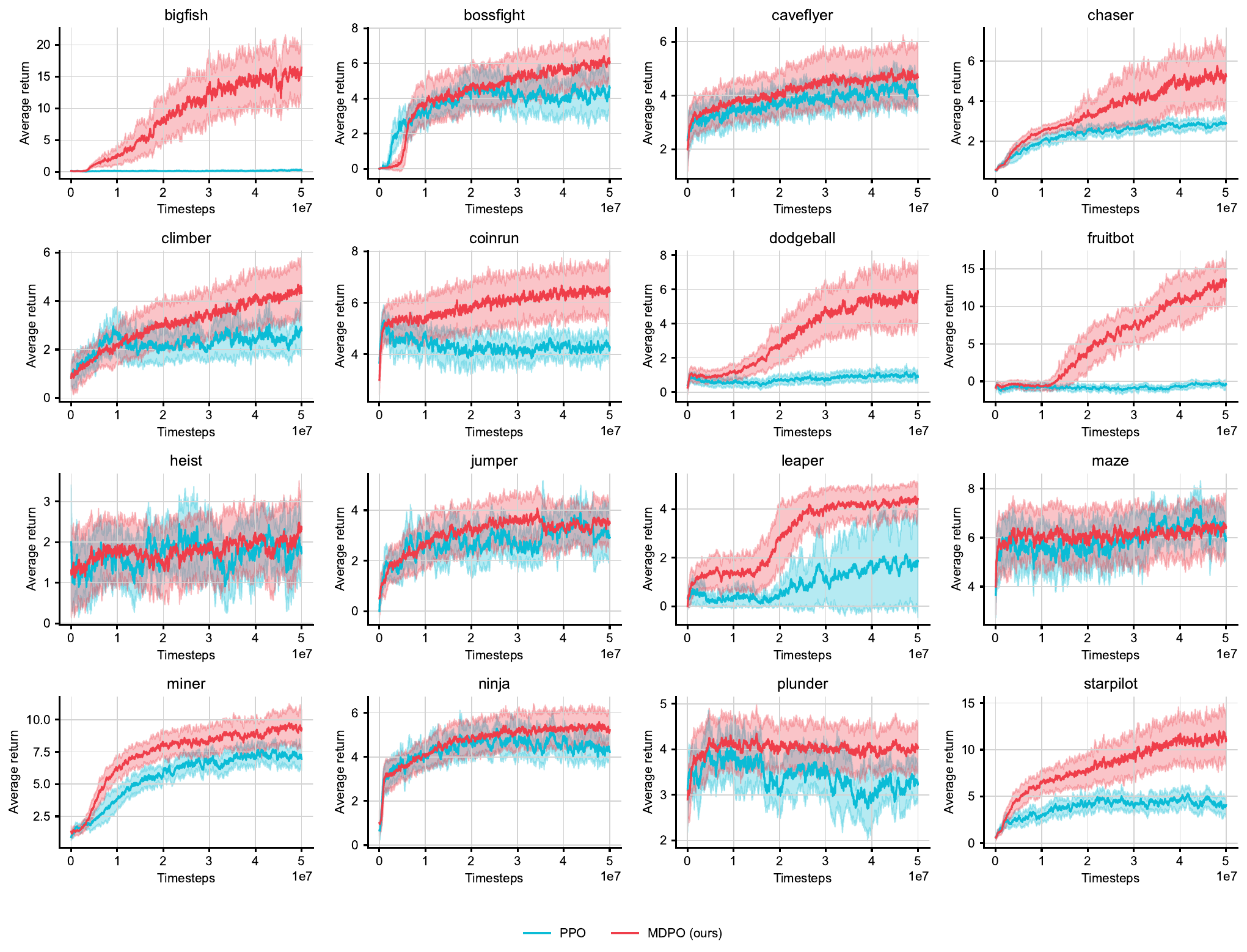}
	\caption{Generalization performance of {\color{myblue}PPO} and {\color{myred}MDPO} from 500 levels in each environment.}\label{test}
\end{figure}
\begin{figure}[!h]
	\centering
	\includegraphics[scale=0.34]{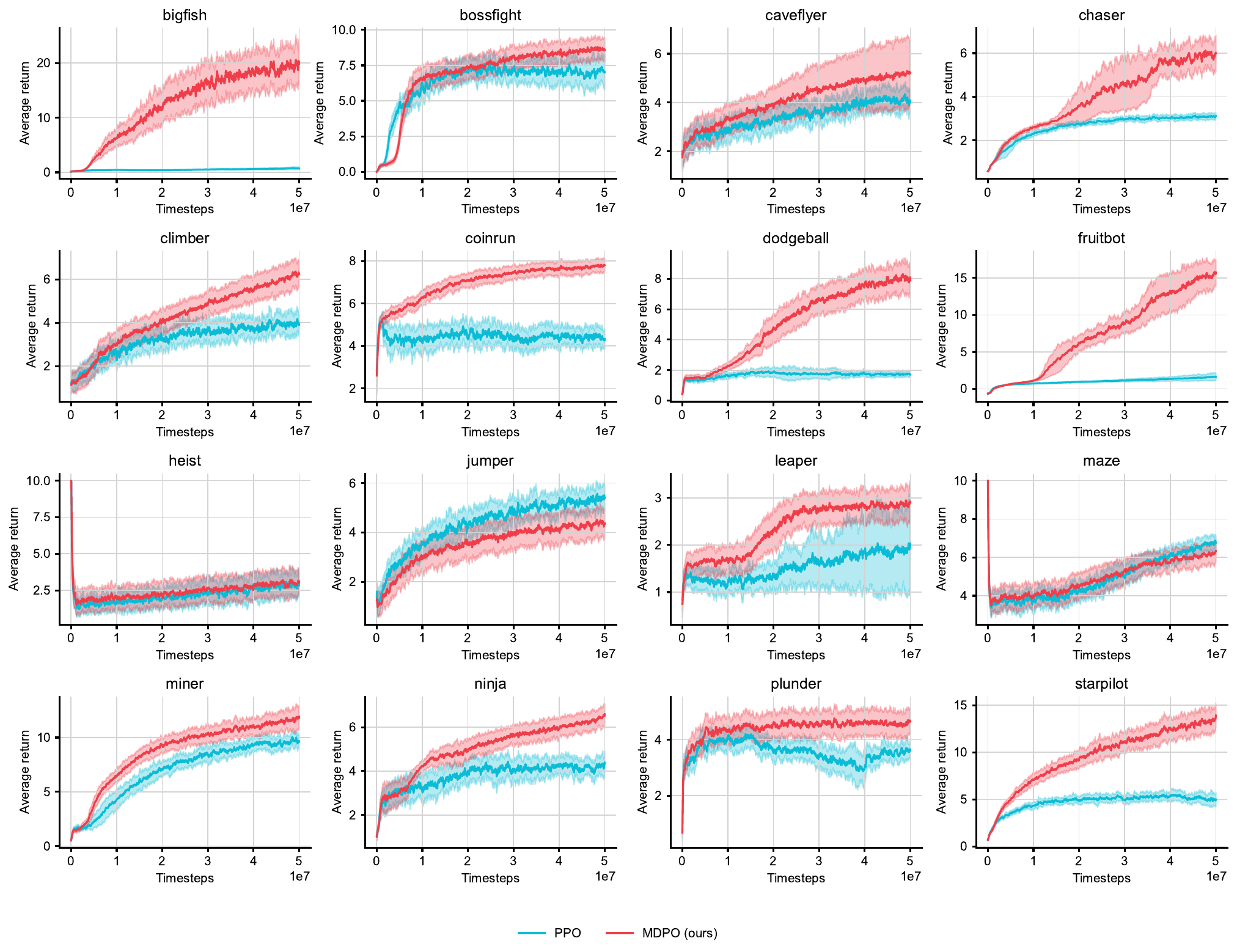}
	\caption{Training performance of {\color{myblue}PPO} and {\color{myred}MDPO} from 500 levels in each environment.}\label{train}
\end{figure}

\subsection{Other Baseline}
We select Simple Policy Optimization (SPO) \citep{xie2024simple} as another baseline and compare the generalization performance of SPO with DML-based SPO, the results are shown in Table \ref{spo test}.
\begin{table}[!h]
	\footnotesize
	\centering
	\caption{generalization performance of SPO and SPO with DML.}
	\label{spo test}
	\begin{tabular}{c|cccc}
			\toprule
			Algo\textbackslash Env & bigfish & dodgeball & fruitbot & starpilot\\
			\midrule
			SPO & {1.04}$^{\pm\text{0.96}}$ & {1.59}$^{\pm\text{0.87}}$ & {\bf2.28}$^{\pm\text{1.64}}$ &{6.06}$^{\pm\text{1.54}}$\\
			SPO with DML & {\bf4.62}$^{\pm\text{2.60}}$ & {\bf4.66}$^{\pm\text{1.44}}$ & {1.13}$^{\pm\text{1.15}}$ &{\bf8.65}$^{\pm\text{2.47}}$\\
			\bottomrule
		\end{tabular}
\end{table}

We can see that DML can also significantly improve the generalization performance of SPO, which fully demonstrates that DML-based policy optimization is a general learning framework for enhancing the generalization ability of diverse reinforcement learning algorithms.

\subsection{More Ablation Results}
Here, we additionally present the training curves from the Ablation Study (Section \ref{Ablation Study}), as shown in Figure \ref{double train}.
\begin{figure}[!h]
	\centering
	\includegraphics[width=\textwidth]{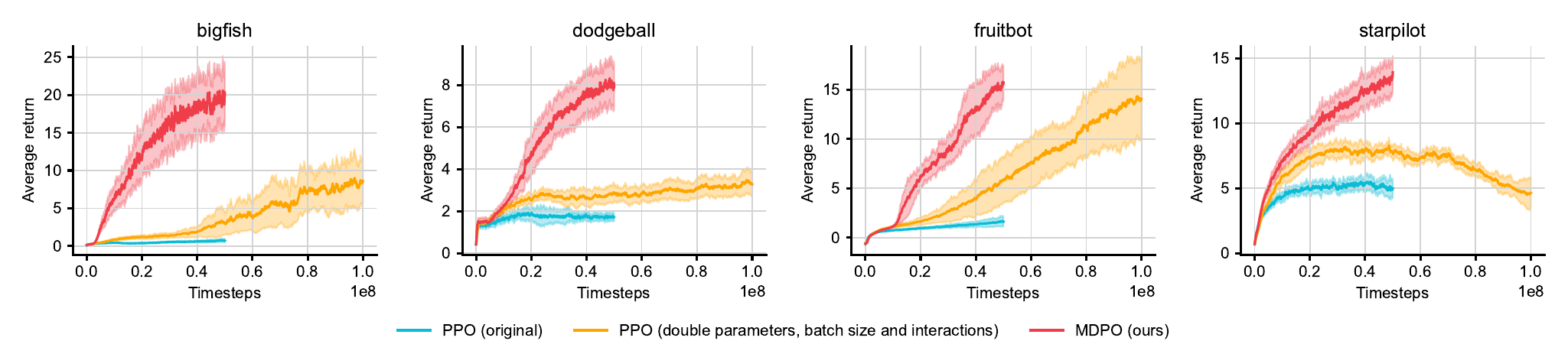}
	\caption{Training performance of {\color{orange}PPO baseline with double model size, batch size, and total number of interactions}, compared to original {\color{myblue}PPO} and {\color{myred}MDPO}.}\label{double train}
\end{figure}

Interestingly, although the {\color{orange}scaled-up PPO} nearly matches {\color{myred}MDPO} in training performance during the final stage of training in the fruitbot environment, there remains a substantial gap in their generalization performance (as shown in Figure \ref{double}). This provides further strong evidence that DML effectively enhances the policy robustness to irrelevant features, as {\color{myred}MDPO} achieves significantly better generalization performance despite comparable training performance.

\newpage
\subsection{Additional Visualizations}
We also generate adversarial samples by adjusting the brightness, contrast, saturation, and hue of the images, and test the robustness of the {\color{myblue}PPO} encoder and our {\color{myred}MDPO} encoder, as shown in Figure \ref{t-SNE-new}.
\begin{figure}[!h]
	\centering
	\includegraphics[scale=0.4]{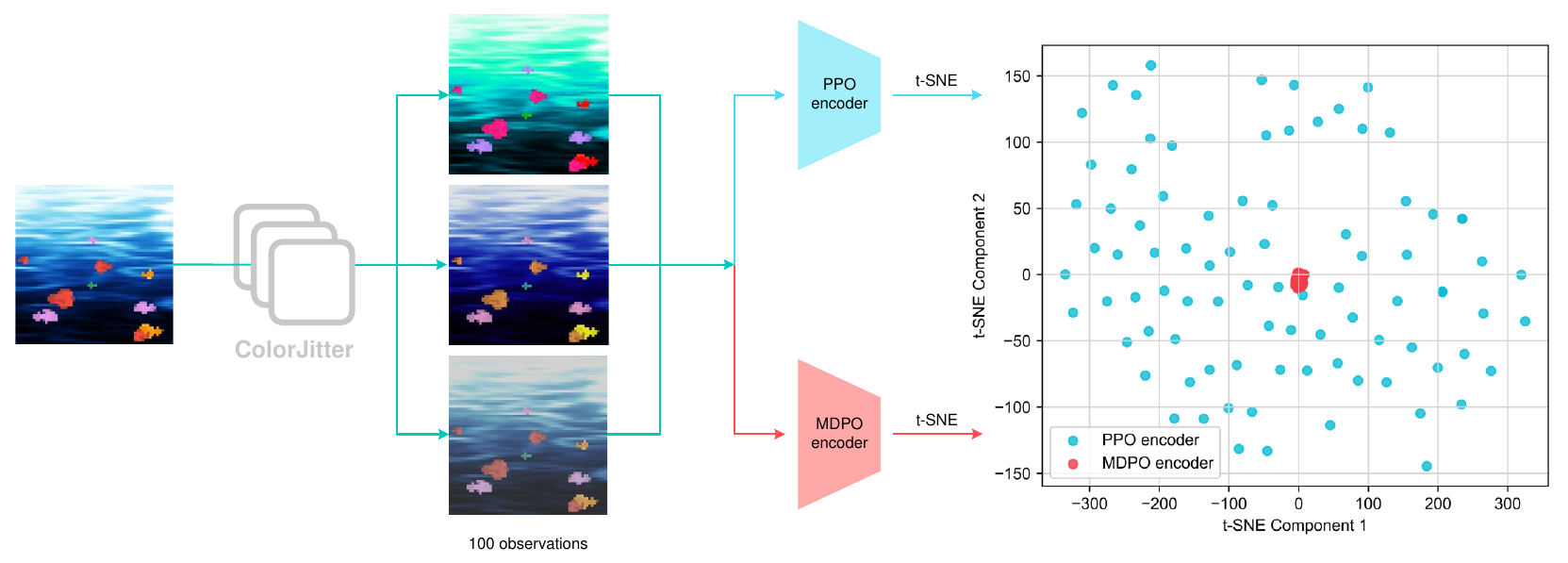}
	\caption{The robustness of PPO and MDPO to brightness, contrast, saturation, and hue.}
	\label{t-SNE-new}
\end{figure}

We can see that the {\color{myred}MDPO} policy has also learned robustness representations to these irrelevant factors, while the {\color{myblue}PPO} policy remains sensitive to them. Additionally, we present adversarial samples generated by random CNNs, as shown in Figure \ref{adversarial samples 1}, as well as those generated by randomly adjusting brightness, contrast, saturation, and hue, as can be seen from Figure \ref{adversarial samples 2}.
\begin{figure}[!h]
	\centering
	\includegraphics[scale=0.17]{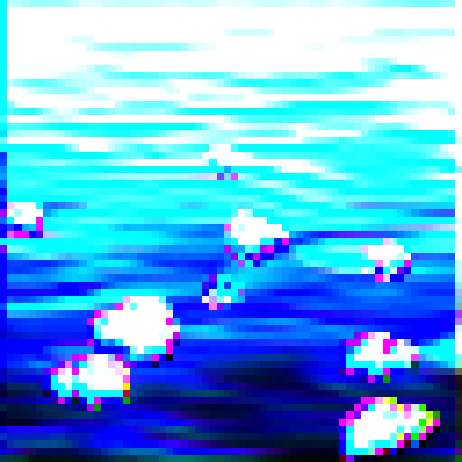}\includegraphics[scale=0.17]{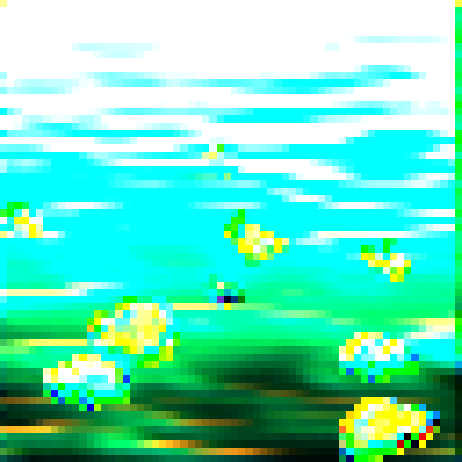}\includegraphics[scale=0.17]{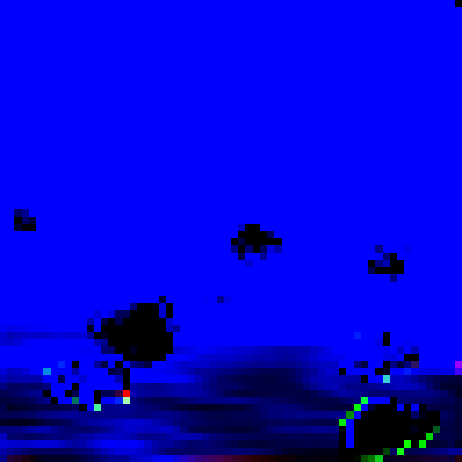}\includegraphics[scale=0.17]{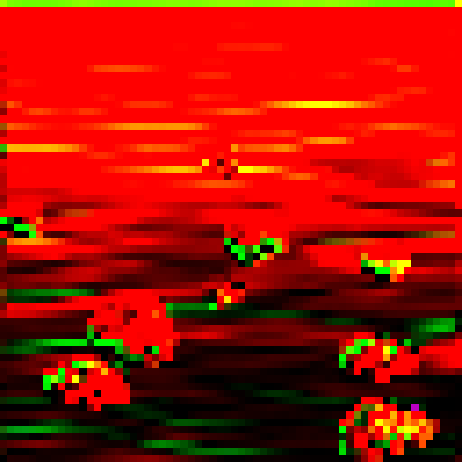}\includegraphics[scale=0.17]{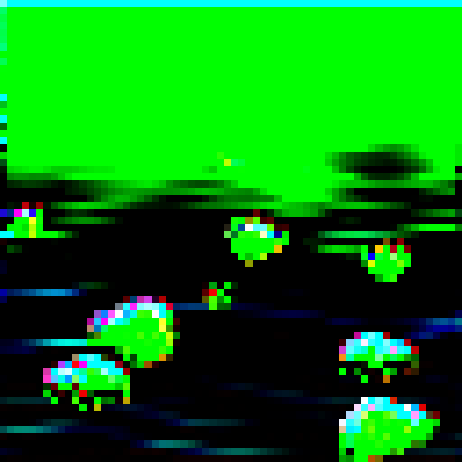}\includegraphics[scale=0.17]{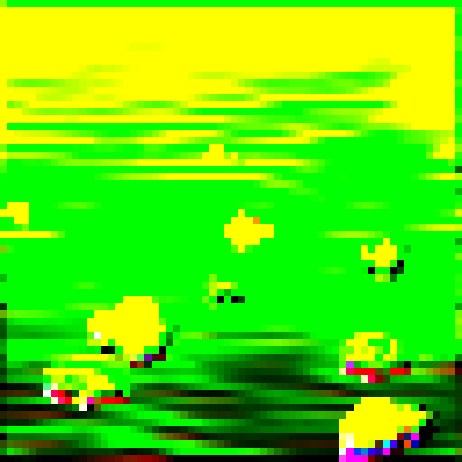}
	\includegraphics[scale=0.17]{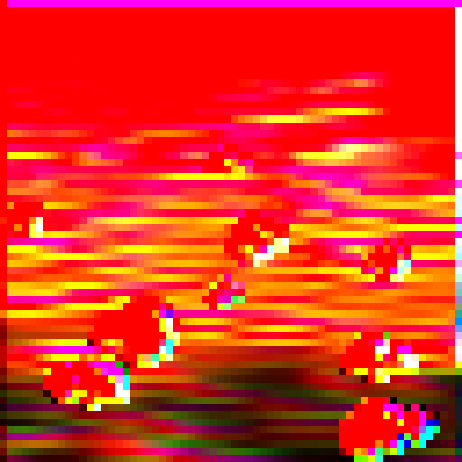}\includegraphics[scale=0.17]{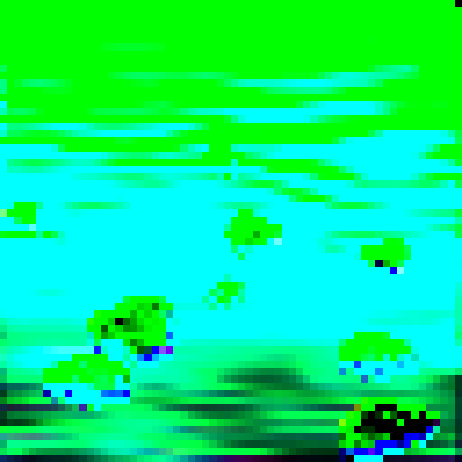}\includegraphics[scale=0.17]{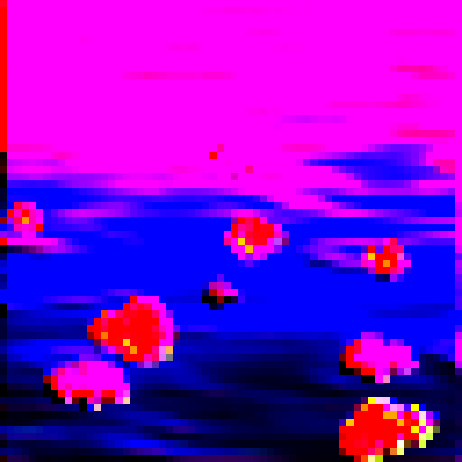}\includegraphics[scale=0.17]{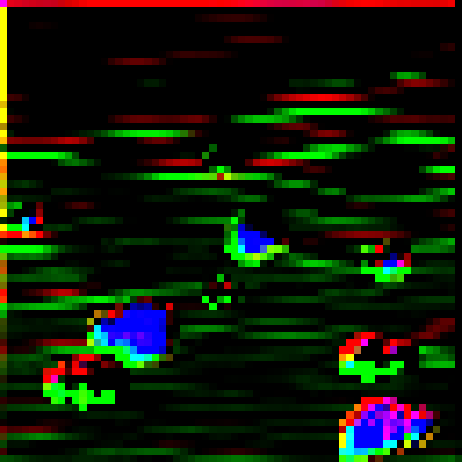}\includegraphics[scale=0.17]{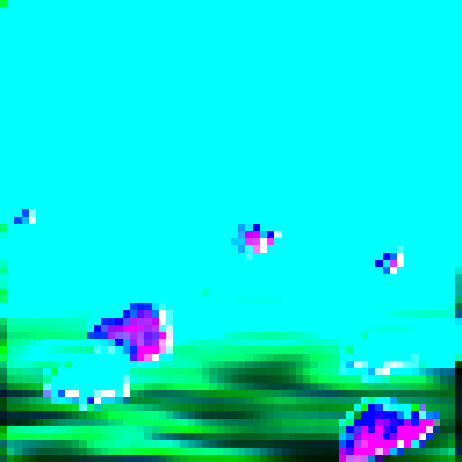}\includegraphics[scale=0.17]{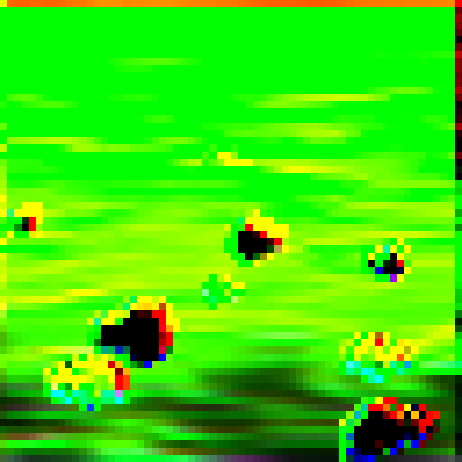}
	\caption{Adversarial samples generated by random CNNs.}\label{adversarial samples 1}
\end{figure}

\begin{figure}[!h]
	\centering
	\includegraphics[scale=0.17]{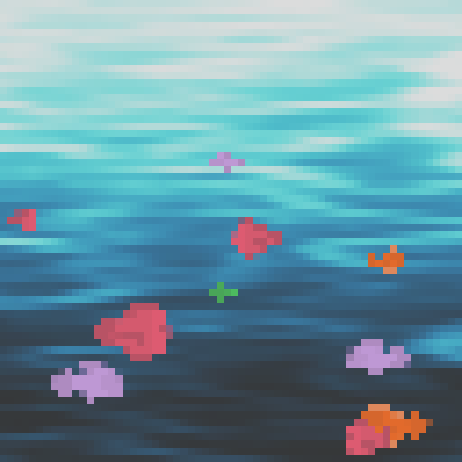}\includegraphics[scale=0.17]{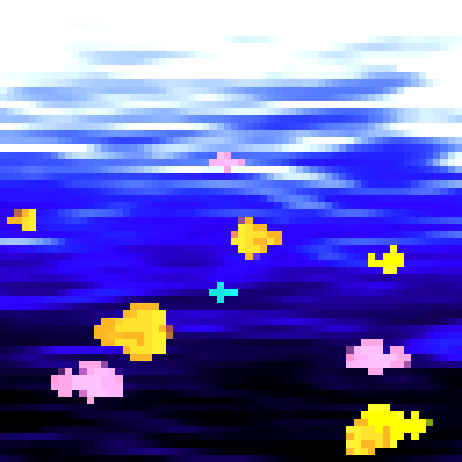}\includegraphics[scale=0.17]{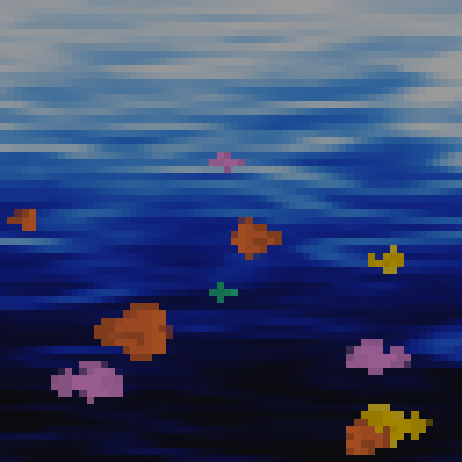}\includegraphics[scale=0.17]{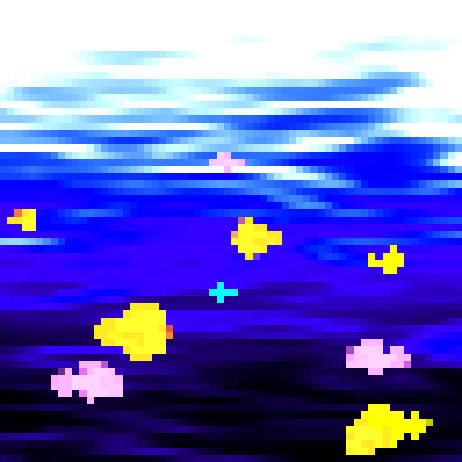}\includegraphics[scale=0.17]{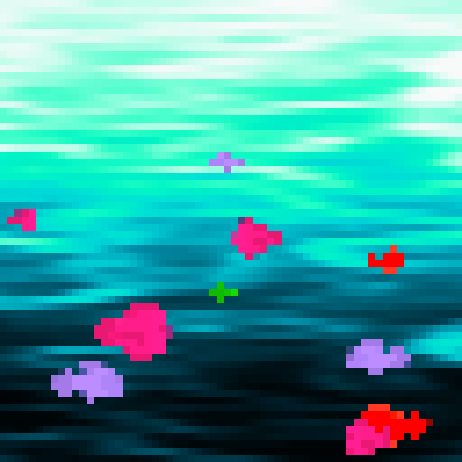}\includegraphics[scale=0.17]{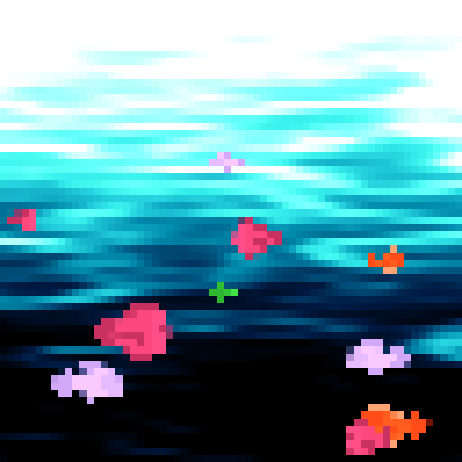}
	\includegraphics[scale=0.17]{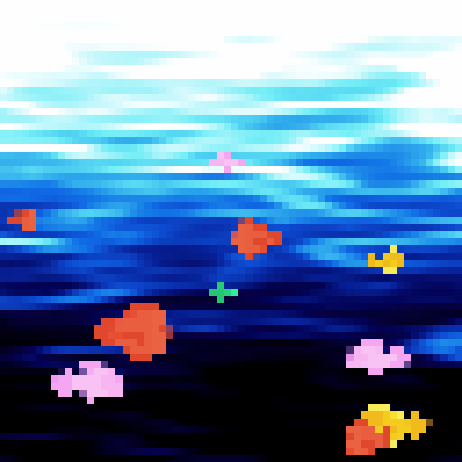}\includegraphics[scale=0.17]{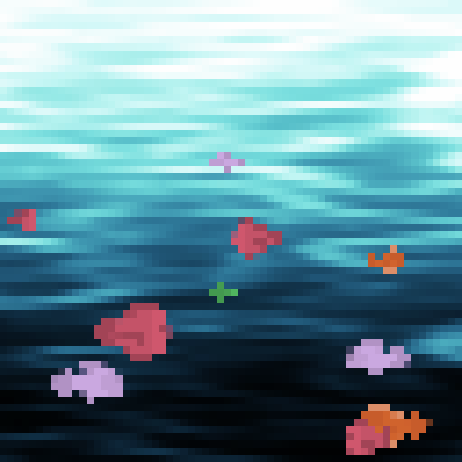}\includegraphics[scale=0.17]{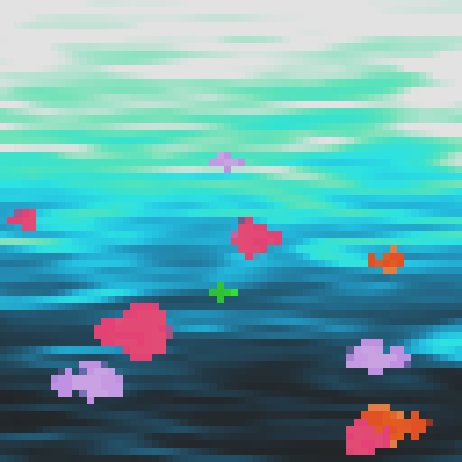}\includegraphics[scale=0.17]{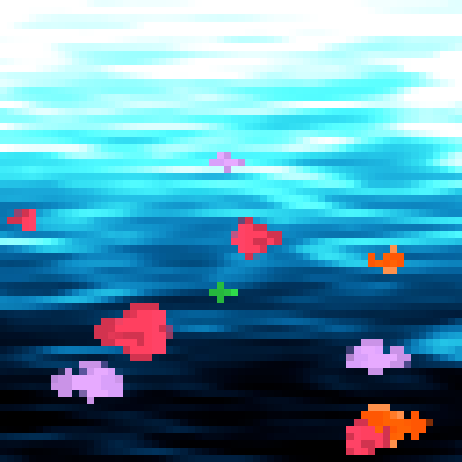}\includegraphics[scale=0.17]{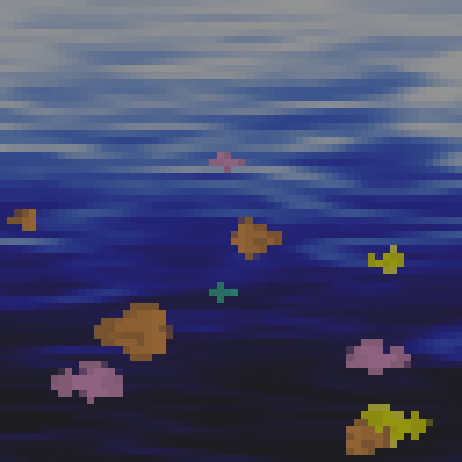}\includegraphics[scale=0.17]{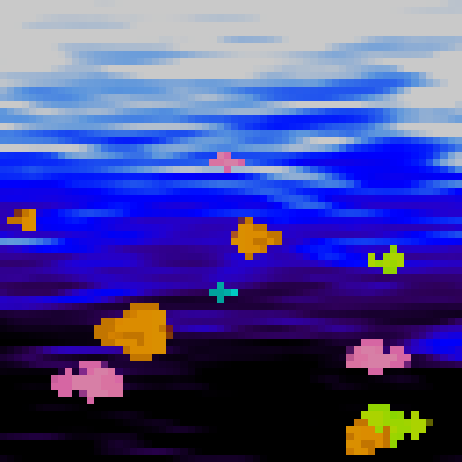}
	\caption{Adversarial samples generated by different brightness, contrast, saturation, and hue.}\label{adversarial samples 2}
\end{figure}

\newpage
\section{Proofs} \label{Proofs}
Let's start with some useful lemmas.
\begin{lemma}[Performance difference]\label{lemma 1}
	Let $\mu_f(\cdot|s_t)=\pi(\cdot|f(s_t))$ and $\tilde{\mu}_f(\cdot|s_t)=\tilde{\pi}(\cdot|f(s_t))$, define training and generalization performance as
	\begin{equation}
		\eta(\pi)=\frac{1}{1-\gamma}\mathop{\mathbb{E}}_{\substack{f\sim p_{\mathrm{train}}(\cdot)\\s\sim d^{\mu_f}(\cdot)\\a\sim\mu_f(\cdot|s)}}\left[r(s,a)\right],\enspace\zeta(\pi)=\frac{1}{1-\gamma}\mathop{\mathbb{E}}_{\substack{f\sim p(\cdot)\\s\sim d^{\mu_f}(\cdot)\\a\sim\mu_f(\cdot|s)}}\left[r(s,a)\right].
	\end{equation}
	Then the differences in training and generalization performance can be expressed as
	\begin{equation}
		\eta(\tilde{\pi})-\eta(\pi)=\frac{1}{1-\gamma}\mathop{\mathbb{E}}_{\substack{f\sim p_{\mathrm{train}}(\cdot)\\s\sim d^{\tilde{\mu}_f}(\cdot)\\a\sim\tilde{\mu}_f(\cdot|s)}}\left[A^{\mu_f}(s,a)\right],\enspace\zeta(\tilde{\pi})-\zeta(\pi)=\frac{1}{1-\gamma}\mathop{\mathbb{E}}_{\substack{f\sim p(\cdot)\\s\sim d^{\tilde{\mu}_f}(\cdot)\\a\sim\tilde{\mu}_f(\cdot|s)}}\left[A^{\mu_f}(s,a)\right].
	\end{equation}
\end{lemma}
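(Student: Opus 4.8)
The plan is to reduce both claimed identities to a single, per-rendering-function statement on the common underlying MDP $\mathcal{M}$, and then to recognize that statement as the classical performance difference lemma of Kakade and Langford applied to the underlying policies $\mu_f$ and $\tilde{\mu}_f$. By construction $\eta$ and $\zeta$ are expectations over $f$ (under $p_{\mathrm{train}}$ and $p$, respectively) of the discounted return obtained by running the corresponding underlying policy on $\mathcal{M}$. Since $f$ is drawn once at the outset and is independent of the subsequent trajectory, linearity of expectation lets me pull the difference inside: writing $\eta_f(\pi)=\frac{1}{1-\gamma}\mathbb{E}_{s\sim d^{\mu_f},a\sim\mu_f}[r(s,a)]$, I have $\eta(\tilde{\pi})-\eta(\pi)=\mathbb{E}_{f\sim p_{\mathrm{train}}}[\eta_f(\tilde{\pi})-\eta_f(\pi)]$, and identically for $\zeta$ with $f\sim p$. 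It therefore suffices to prove, for each fixed $f$, the per-$f$ identity $\eta_f(\tilde{\pi})-\eta_f(\pi)=\frac{1}{1-\gamma}\mathbb{E}_{s\sim d^{\tilde{\mu}_f},a\sim\tilde{\mu}_f}[A^{\mu_f}(s,a)]$.

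First I would record the elementary fact that the normalized form coincides with the trajectory form of the discounted return. Because $d^{\mu_f}(s)=(1-\gamma)\sum_{t}\gamma^t\mathbb{P}(s_t=s\mid\mu_f)$, the prefactor $\frac{1}{1-\gamma}$ cancels, giving $\frac{1}{1-\gamma}\mathbb{E}_{s\sim d^{\mu_f},a\sim\mu_f}[r(s,a)]=\mathbb{E}_{\tau\sim\mu_f}[\sum_{t}\gamma^t r(s_t,a_t)]$ with $s_0\sim\rho$ on $\mathcal{M}$. The same cancellation rewrites the target right-hand side as $\mathbb{E}_{\tau\sim\tilde{\mu}_f}[\sum_{t}\gamma^t A^{\mu_f}(s_t,a_t)]$, so the per-$f$ claim becomes a purely trajectory-level statement.

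The core step is the advantage-decomposition telescoping argument on $\mathcal{M}$. I would expand $A^{\mu_f}(s_t,a_t)=r(s_t,a_t)+\gamma\,\mathbb{E}_{s_{t+1}}[V^{\mu_f}(s_{t+1})]-V^{\mu_f}(s_t)$, substitute into the trajectory sum under $\tilde{\mu}_f$ (using the tower property so that the inner transition expectation merges into the trajectory expectation), and split the result. The reward terms reassemble into $\eta_f(\tilde{\pi})$, while the terms $\gamma^{t+1}V^{\mu_f}(s_{t+1})-\gamma^{t}V^{\mu_f}(s_t)$ telescope to $-V^{\mu_f}(s_0)$, whose expectation over $s_0\sim\rho$ is exactly $\eta_f(\pi)$. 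This establishes the per-$f$ identity, and averaging over $f$ completes both statements simultaneously.

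The main obstacle, and really the only subtlety, is justifying the telescoping: I must argue that $\lim_{t\to\infty}\gamma^t\,\mathbb{E}_{\tau\sim\tilde{\mu}_f}[V^{\mu_f}(s_t)]=0$ so the boundary term at infinity vanishes. This follows from $\gamma\in(0,1)$ together with the boundedness of $r$, hence of $V^{\mu_f}$, which is implicit in the finite $r_{\max}$ used elsewhere in the paper. A secondary point to check is that interchanging summation and expectation, and pulling the $f$-average outside the difference, are legitimate; both rest on the absolute convergence of the discounted series. Once these facts are in place, the remaining manipulations are routine.
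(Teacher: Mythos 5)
Your proof is correct and follows essentially the same route as the paper, which simply cites \citet{kakade2002approximately} for this lemma: you pull the expectation over $f$ outside by linearity and then prove the per-$f$ identity via the standard advantage-decomposition telescoping argument, which is exactly the classical performance difference lemma being invoked. The only difference is that you spell out the details (including the vanishing boundary term and the shared initial distribution $\rho$ across all $f$, guaranteed by Assumption 2) that the paper delegates to the citation.
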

\begin{proof}
	This result can be directly derived from \citet{kakade2002approximately}.
\end{proof}

\begin{lemma}\label{lemma 2}
	The divergence between two normalized discounted visitation distribution, $\Vert d^{\tilde{\mu}}-d^{\mu}\Vert_1$, is bounded by an average divergence of $\tilde{\mu}$ and $\mu$:
	\begin{equation}
		\Vert d^{\tilde{\mu}}-d^{\mu}\Vert_1\leq\frac{\gamma}{1-\gamma}\mathop{\mathbb{E}}_{s\sim d^{\mu}(\cdot)}\left[\Vert\tilde{\mu}-\mu\Vert_1\right]=\frac{2\gamma}{1-\gamma}\mathop{\mathbb{E}}_{s\sim d^{\mu}(\cdot)}\left[D_{\mathrm{TV}}(\tilde{\mu}\Vert\mu)[s]\right],
	\end{equation}
	where $D_{\mathrm{TV}}(\tilde{\mu}\Vert\mu)[s]=\frac{1}{2}\sum_{a\in\mathcal{A}}\vert\tilde{\mu}(a|s)-\mu(a|s)\vert$ represents the Total Variation (TV) distance.
\end{lemma}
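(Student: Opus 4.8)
The plan is to pass to a linear-algebraic representation of the visitation distributions and exploit a resolvent identity. First I would introduce the state-to-state transition matrix induced by a policy, $P_{\mu}(s'|s)=\sum_{a}\mu(a|s)\mathcal{P}(s'|s,a)$, and observe that the normalized discounted visitation distribution satisfies $d^{\mu}=(1-\gamma)\sum_{t\ge0}\gamma^{t}(P_{\mu}^{\top})^{t}\rho=(1-\gamma)(I-\gamma P_{\mu}^{\top})^{-1}\rho$, where the Neumann series converges because $\gamma<1$. Writing $G_{\mu}=(I-\gamma P_{\mu}^{\top})^{-1}$, the key algebraic step is the resolvent (push-through) identity
\begin{equation}
G_{\tilde{\mu}}-G_{\mu}=\gamma\, G_{\tilde{\mu}}\,(P_{\tilde{\mu}}^{\top}-P_{\mu}^{\top})\,G_{\mu},
\end{equation}
which, after multiplying by $(1-\gamma)\rho$ and using $(1-\gamma)G_{\mu}\rho=d^{\mu}$, yields the exact formula $d^{\tilde{\mu}}-d^{\mu}=\gamma\,G_{\tilde{\mu}}(P_{\tilde{\mu}}^{\top}-P_{\mu}^{\top})d^{\mu}$.

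Next I would take $\ell_1$ norms. The central estimate is that $G_{\tilde{\mu}}$ is a bounded operator in $\ell_1$ with $\|G_{\tilde{\mu}}y\|_1\le\frac{1}{1-\gamma}\|y\|_1$; this follows from expanding $G_{\tilde{\mu}}=\sum_{t\ge0}\gamma^{t}(P_{\tilde{\mu}}^{\top})^{t}$ and noting that each $P_{\tilde{\mu}}^{\top}$ is column-stochastic, hence an $\ell_1$ contraction on arbitrary (signed) vectors via $\|P_{\tilde{\mu}}^{\top}x\|_1\le\sum_{s'}\sum_{s}P_{\tilde{\mu}}(s'|s)|x_s|=\sum_s|x_s|=\|x\|_1$. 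Applying this bound with $y=(P_{\tilde{\mu}}^{\top}-P_{\mu}^{\top})d^{\mu}$ gives $\|d^{\tilde{\mu}}-d^{\mu}\|_1\le\frac{\gamma}{1-\gamma}\|(P_{\tilde{\mu}}^{\top}-P_{\mu}^{\top})d^{\mu}\|_1$.

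It then remains to bound $\|(P_{\tilde{\mu}}^{\top}-P_{\mu}^{\top})d^{\mu}\|_1$ by the advertised average policy divergence. I would expand the $\ell_1$ norm over next-states, push the absolute value inside the sum over current states via the triangle inequality, and then write $P_{\tilde{\mu}}(s'|s)-P_{\mu}(s'|s)=\sum_a(\tilde{\mu}(a|s)-\mu(a|s))\mathcal{P}(s'|s,a)$; summing $\mathcal{P}(\cdot|s,a)$ over $s'$ to $1$ collapses the dynamics and leaves exactly $\sum_s d^{\mu}(s)\sum_a|\tilde{\mu}(a|s)-\mu(a|s)|=\mathbb{E}_{s\sim d^{\mu}}[\|\tilde{\mu}-\mu\|_1]$. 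Combining the inequalities yields the first bound, and the final equality is just the identity $\|\tilde{\mu}-\mu\|_1[s]=2D_{\mathrm{TV}}(\tilde{\mu}\Vert\mu)[s]$.

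The step I expect to require the most care is the $\ell_1$ operator-norm bound $\|G_{\tilde{\mu}}\|_1\le(1-\gamma)^{-1}$ for signed inputs: the contraction property of $P_{\tilde{\mu}}^{\top}$ must be invoked on the signed vector $(P_{\tilde{\mu}}^{\top}-P_{\mu}^{\top})d^{\mu}$, not merely on probability vectors, so the argument genuinely hinges on column-stochasticity rather than on mass preservation of nonnegative distributions. An alternative that sidesteps the matrix inverse altogether is a term-by-term telescoping $(P_{\tilde{\mu}}^{\top})^{t}-(P_{\mu}^{\top})^{t}=\sum_{k=0}^{t-1}(P_{\tilde{\mu}}^{\top})^{k}(P_{\tilde{\mu}}^{\top}-P_{\mu}^{\top})(P_{\mu}^{\top})^{t-1-k}$ substituted into the definition of $d^{\mu}$, followed by the same per-state divergence estimate and a geometric resummation; I would retain the resolvent route as the primary argument for brevity.
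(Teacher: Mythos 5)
Your proof is correct: the resolvent identity, the $\ell_1$ contraction of the column-stochastic $P_{\tilde{\mu}}^{\top}$ on signed vectors, and the collapse of the dynamics over next-states all go through exactly as you describe. The paper itself simply cites \citet{achiam2017constrained} for this lemma, and the argument given there is essentially the same matrix-resolvent computation you propose, so there is nothing to reconcile.
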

\begin{proof}
	See \citet{achiam2017constrained}.
\end{proof}

\begin{lemma}\label{lemma 3}
	Given any state $s\in\mathcal{S}$, any two policies $\tilde{\mu}$ and $\mu$, the average advantage, $\mathop{\mathbb{E}}_{a\sim\tilde{\mu}(\cdot|s)}\left[A^{\mu}(s,a)\right]$, is bounded by
	\begin{equation}
		\left\vert\mathbb{E}_{a\sim\tilde{\mu}(\cdot|s)}\left[A^{\mu}(s,a)\right]\right\vert\leq2D_{\mathrm{TV}}(\tilde{\mu}\Vert\mu)[s]\cdot\max_a\left\vert A^{\mu}(s,a)\right\vert.
	\end{equation}
\end{lemma}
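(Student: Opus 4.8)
The plan is to exploit the defining property that the advantage function has zero mean under its own policy, which converts the one-sided expectation into a pairing against the signed measure $\tilde{\mu}-\mu$ and thereby surfaces the total variation distance. This is exactly the mechanism that turns an expectation of the advantage into something controlled by a divergence between policies.

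First I would record the identity $\mathbb{E}_{a\sim\mu(\cdot|s)}\left[A^{\mu}(s,a)\right]=0$, which follows immediately from $A^{\mu}(s,a)=Q^{\mu}(s,a)-V^{\mu}(s)$ together with $V^{\mu}(s)=\mathbb{E}_{a\sim\mu(\cdot|s)}\left[Q^{\mu}(s,a)\right]$. Because this quantity vanishes, I am free to subtract it from the target expectation without changing its value, rewriting the left-hand side as a pairing against the difference of the two action distributions:
\begin{equation}
	\mathbb{E}_{a\sim\tilde{\mu}(\cdot|s)}\left[A^{\mu}(s,a)\right]=\sum_{a\in\mathcal{A}}\bigl(\tilde{\mu}(a|s)-\mu(a|s)\bigr)A^{\mu}(s,a).
\end{equation}

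Next I would take absolute values, apply the triangle inequality, and factor out the largest magnitude of the advantage over actions, giving
\begin{equation}
	\left\vert\sum_{a\in\mathcal{A}}\bigl(\tilde{\mu}(a|s)-\mu(a|s)\bigr)A^{\mu}(s,a)\right\vert\leq\Bigl(\max_a\left\vert A^{\mu}(s,a)\right\vert\Bigr)\sum_{a\in\mathcal{A}}\left\vert\tilde{\mu}(a|s)-\mu(a|s)\right\vert.
\end{equation}
Finally, invoking the definition $D_{\mathrm{TV}}(\tilde{\mu}\Vert\mu)[s]=\tfrac12\sum_{a\in\mathcal{A}}\left\vert\tilde{\mu}(a|s)-\mu(a|s)\right\vert$ supplied in Lemma \ref{lemma 2}, the $\ell_1$ sum equals $2D_{\mathrm{TV}}(\tilde{\mu}\Vert\mu)[s]$, which yields the claimed bound.

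I do not anticipate any serious obstacle here: the single genuine idea is recognizing the zero-mean identity that allows the difference $\tilde{\mu}-\mu$ to appear, after which everything reduces to a routine Hölder-style estimate matching the $\ell_1$ factor to the total variation distance. The only point requiring mild care is the finiteness of the action space (or absolute summability) so that the sum and the rearrangement are well defined, which is implicit in the discrete notation used throughout the preliminaries.
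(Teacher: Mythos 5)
Your proposal is correct and follows essentially the same route as the paper's proof: both exploit the zero-mean identity $\mathbb{E}_{a\sim\mu(\cdot|s)}\left[A^{\mu}(s,a)\right]=0$ to rewrite the expectation as a pairing against $\tilde{\mu}-\mu$, then apply the $\ell_1$--$\ell_\infty$ H\"older estimate and the factor-of-two relation between the $\ell_1$ norm and the total variation distance. No gaps.
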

\begin{proof}
	Note that
	\begin{equation}
		\begin{split}
			\mathbb{E}_{a\sim\mu(\cdot|s)}\left[A^{\mu}(s,a)\right]=&\mathbb{E}_{a\sim\mu(\cdot|s)}\left[Q^{\mu}(s,a)-V^{\mu}(s)\right] \\
			=&\mathbb{E}_{a\sim\mu(\cdot|s)}\left[Q^{\mu}(s,a)\right]-V^{\mu}(s) \\
			=&V^{\mu}(s)-V^{\mu}(s) \\
			=&0,\\
		\end{split}
	\end{equation}
	thus,
	\begin{equation}
		\begin{split}
			\left\vert\mathbb{E}_{a\sim\tilde{\mu}(\cdot|s)}\left[A^{\mu}(s,a)\right]\right\vert&=\left\vert\mathbb{E}_{a\sim\tilde{\mu}(\cdot|s)}\left[A^{\mu}(s,a)\right]-\mathbb{E}_{a\sim\mu(\cdot|s)}\left[A^{\mu}(s,a)\right]\right\vert\\
			&\leq\left\Vert\tilde{\mu}-\mu\right\Vert_1\cdot\left\Vert A^{\mu}(s,a)\right\Vert_{\infty}\\
			&=2D_{\mathrm{TV}}(\tilde{\mu}\Vert\mu)[s]\cdot\max_a\left\vert A^{\mu}(s,a)\right\vert.\\
		\end{split}
	\end{equation}
	This is a widely used trick \citep{schulman2015trust, zhuang2023behavior, gan2024transductive}.
\end{proof}

In addition, using the above lemmas, the following corollary can be obtained, which will be repeatedly used in our proof.
\begin{corollary}\label{corollary 1}
	Given any two policies, $\tilde{\mu}$ and $\mu$, the following bound holds:
	\begin{equation}
		\left\vert\mathop{\mathbb{E}}_{\substack{s\sim d^{\tilde{\mu}}(\cdot)\\a\sim\tilde{\mu}(\cdot|s)}}\left[A^{\mu}(s,a)\right]-\mathop{\mathbb{E}}_{\substack{s\sim d^{\mu}(\cdot)\\a\sim\tilde{\mu}(\cdot|s)}}\left[A^{\mu}(s,a)\right]\right\vert\leq\frac{2\epsilon\gamma}{1-\gamma}\mathop{\mathbb{E}}_{s\sim d^{\mu}(\cdot)}\left[D_{\mathrm{TV}}(\tilde{\mu}\Vert\mu)[s]\right],
	\end{equation}
	where $\epsilon=\max_s\left\vert\mathbb{E}_{a\sim\tilde{\mu}(\cdot|s)}\left[A^{\mu}(s,a)\right]\right\vert$.
\end{corollary}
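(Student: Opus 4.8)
The plan is to isolate the single source of discrepancy between the two expectations—namely the state-visitation measure—and then control it with Lemmas \ref{lemma 1} (not needed) and \ref{lemma 2} already established above. First I would introduce the shorthand $g(s):=\mathbb{E}_{a\sim\tilde{\mu}(\cdot|s)}\left[A^{\mu}(s,a)\right]$, so that the two terms inside the absolute value share the \emph{identical} integrand $g$ and differ only in the sampling distribution over states, $d^{\tilde{\mu}}$ versus $d^{\mu}$. By the very definition of $\epsilon$ in the statement, $\max_s\lvert g(s)\rvert=\epsilon$, i.e.\ $\Vert g\Vert_{\infty}=\epsilon$.

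Next I would rewrite the quantity inside the absolute value as a single pairing against the signed measure $d^{\tilde{\mu}}-d^{\mu}$:
\[
\mathop{\mathbb{E}}_{s\sim d^{\tilde{\mu}}(\cdot)}\left[g(s)\right]-\mathop{\mathbb{E}}_{s\sim d^{\mu}(\cdot)}\left[g(s)\right]=\sum_{s}\left(d^{\tilde{\mu}}(s)-d^{\mu}(s)\right)g(s).
\]
Applying H\"older's inequality (the $\ell_1$--$\ell_{\infty}$ pairing) then yields
\[
\left\vert\sum_{s}\left(d^{\tilde{\mu}}(s)-d^{\mu}(s)\right)g(s)\right\vert\leq\Vert d^{\tilde{\mu}}-d^{\mu}\Vert_1\cdot\Vert g\Vert_{\infty}=\epsilon\,\Vert d^{\tilde{\mu}}-d^{\mu}\Vert_1.
\]

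The final step invokes Lemma \ref{lemma 2}, which bounds the $\ell_1$ gap between the two visitation distributions by the average total-variation distance of the policies taken under $d^{\mu}$, giving $\Vert d^{\tilde{\mu}}-d^{\mu}\Vert_1\leq\frac{2\gamma}{1-\gamma}\mathbb{E}_{s\sim d^{\mu}(\cdot)}\left[D_{\mathrm{TV}}(\tilde{\mu}\Vert\mu)[s]\right]$. Substituting this into the previous display produces exactly the claimed constant $\frac{2\epsilon\gamma}{1-\gamma}$ in front of $\mathbb{E}_{s\sim d^{\mu}(\cdot)}\left[D_{\mathrm{TV}}(\tilde{\mu}\Vert\mu)[s]\right]$, completing the argument.

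There is no deep obstacle here; the whole proof collapses to a two-line chain once the right decomposition is spotted. The only point that demands care is bookkeeping the reference measure: Lemma \ref{lemma 2} expresses its right-hand side as an expectation under $d^{\mu}$ rather than $d^{\tilde{\mu}}$, and it is fortunate—indeed by design—that this matches precisely the measure appearing in the target bound, so no change-of-measure correction is required. I would also be sure to use $\Vert g\Vert_{\infty}=\epsilon$ in the H\"older step rather than a cruder per-action estimate of $A^{\mu}$, since it is this sharp choice that yields the stated tight constant.
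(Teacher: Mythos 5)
Your proposal is correct and follows essentially the same route as the paper's proof: both isolate the inner expectation $\mathbb{E}_{a\sim\tilde{\mu}(\cdot|s)}[A^{\mu}(s,a)]$ as a function of $s$, apply the $\ell_1$--$\ell_\infty$ H\"older pairing against $d^{\tilde{\mu}}-d^{\mu}$, and finish with Lemma \ref{lemma 2}. No gaps.
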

\begin{proof}
	We rewrite the expectation as
	\begin{equation}
		\left\vert\mathop{\mathbb{E}}_{\substack{s\sim d^{\tilde{\mu}}(\cdot)\\a\sim\tilde{\mu}(\cdot|s)}}\left[A^{\mu}(s,a)\right]-\mathop{\mathbb{E}}_{\substack{s\sim d^{\mu}(\cdot)\\a\sim\tilde{\mu}(\cdot|s)}}\left[A^{\mu}(s,a)\right]\right\vert=\left\vert\mathop{\mathbb{E}}_{s\sim d^{\tilde{\mu}}(\cdot)}\left\{\mathop{\mathbb{E}}_{a\sim\tilde{\mu}(\cdot|s)}\left[A^{\mu}(s,a)\right]\right\}-\mathop{\mathbb{E}}_{s\sim d^{\mu}(\cdot)}\left\{\mathop{\mathbb{E}}_{a\sim\tilde{\mu}(\cdot|s)}\left[A^{\mu}(s,a)\right]\right\}\right\vert,
	\end{equation}
	where the expectation $\mathop{\mathbb{E}}_{a\sim\tilde{\mu}(\cdot|s)}\left[A^{\mu}(s,a)\right]$ is a function of $s$, then
	\begin{equation}
		\left\vert\mathop{\mathbb{E}}_{s\sim d^{\tilde{\mu}}(\cdot)}\left\{\mathop{\mathbb{E}}_{a\sim\tilde{\mu}(\cdot|s)}\left[A^{\mu}(s,a)\right]\right\}-\mathop{\mathbb{E}}_{s\sim d^{\mu}(\cdot)}\left\{\mathop{\mathbb{E}}_{a\sim\tilde{\mu}(\cdot|s)}\left[A^{\mu}(s,a)\right]\right\}\right\vert\leq\left\Vert d^{\tilde{\mu}}-d^{\mu}\right\Vert_1\cdot\left\Vert\mathop{\mathbb{E}}_{a\sim\tilde{\mu}(\cdot|s)}\left[A^{\mu}(s,a)\right]\right\Vert_{\infty}.
	\end{equation}
	Next, according to Lemma \ref{lemma 2}, we have
	\begin{equation}
		\left\Vert d^{\tilde{\mu}}-d^{\mu}\right\Vert_1\cdot\left\Vert\mathop{\mathbb{E}}_{a\sim\tilde{\mu}(\cdot|s)}\left[A^{\mu}(s,a)\right]\right\Vert_{\infty}=\left\Vert d^{\tilde{\mu}}-d^{\mu}\right\Vert_1\cdot\epsilon\leq\frac{2\epsilon\gamma}{1-\gamma}\mathop{\mathbb{E}}_{s\sim d^{\mu}(\cdot)}\left[D_{\mathrm{TV}}(\tilde{\mu}\Vert\mu)[s]\right],
	\end{equation}
	concluding the proof.
\end{proof}

\subsection{Proof of Lemma \ref{lemma}}\label{proof lemma}
\textbf{Lemma \ref{lemma}.} \textit{
	For any given policy $\pi$, define its underlying policy as $\mu_f(\cdot|s_t)=\pi(\cdot|f(s_t))$, then
	\begin{equation}
		\eta(\pi)=\frac{1}{1-\gamma}\mathop{\mathbb{E}}_{\substack{f\sim p_{\mathrm{train}}(\cdot)\\s\sim d^{\mu_f}(\cdot)\\a\sim\mu_f(\cdot|s)}}\left[r(s,a)\right],\enspace\zeta(\pi)=\frac{1}{1-\gamma}\mathop{\mathbb{E}}_{\substack{f\sim p(\cdot)\\s\sim d^{\mu_f}(\cdot)\\a\sim\mu_f(\cdot|s)}}\left[r(s,a)\right].
	\end{equation}
}
\begin{proof}
	According to the definition of training and generalization performance in (\ref{training and generalization performance}), we have
	\begin{equation}
		\eta(\pi)=\mathbb{E}_{f\sim p_{\mathrm{train}}(\cdot),\tau_f\sim\pi}\left[\sum_{t=0}^{\infty}\gamma^tr_f(o_t^f,a_t)\right],\enspace\zeta(\pi)=\mathbb{E}_{f\sim p(\cdot),\tau_f\sim\pi}\left[\sum_{t=0}^{\infty}\gamma^tr_f(o_t^f,a_t)\right].
	\end{equation}
	To prove Lemma \ref{lemma}, we only need to show that for any given $f\in\mathcal{F}$, the following equation holds:
	\begin{equation}
		\frac{1}{1-\gamma}\mathop{\mathbb{E}}_{\substack{s\sim d^{\mu_f}(\cdot)\\a\sim\mu_f(\cdot|s)}}\left[r(s,a)\right]=\mathbb{E}_{\tau_f\sim\pi}\left[\sum_{t=0}^{\infty}\gamma^tr_f(o_t^f,a_t)\right].
	\end{equation}
	According to the definition of the normalized discounted visitation distribution $d^{\mu}(s)=(1-\gamma)\sum_{t=0}^{\infty}\gamma^t\mathbb{P}(s_t=s|\mu)$, we have
	\begin{equation}
		\begin{split}
			\frac{1}{1-\gamma}\mathop{\mathbb{E}}_{\substack{s\sim d^{\mu_f}(\cdot)\\a\sim\mu_f(\cdot|s)}}\left[r(s,a)\right]&=\frac{1}{1-\gamma}\sum_{s\in\mathcal{S}}(1-\gamma)\sum_{t=0}^{\infty}\gamma^t\mathbb{P}(s_t=s|\mu_f)\sum_{a\in\mathcal{A}}\mu_f(a|s)\cdot r(s,a)\\
			&=\sum_{t=0}^{\infty}\sum_{s\in\mathcal{S}}\mathbb{P}(s_t=s|\mu_f)\sum_{a\in\mathcal{A}}\mu_f(a|s)\cdot\gamma^tr(s,a)\\
		\end{split}
	\end{equation}
	Next, according to Assumption \ref{assumption 2}, we have
	\begin{equation}
		\begin{split}
			\frac{1}{1-\gamma}\mathop{\mathbb{E}}_{\substack{s\sim d^{\mu_f}(\cdot)\\a\sim\mu_f(\cdot|s)}}\left[r(s,a)\right]&=\sum_{t=0}^{\infty}\sum_{s\in\mathcal{S}}\mathbb{P}(s_t=s|\mu_f)\sum_{a\in\mathcal{A}}\mu_f(a|s)\cdot\gamma^tr(s,a)\\
			&=\sum_{t=0}^{\infty}\sum_{s\in\mathcal{S}}\mathbb{P}(f(s_t)=f(s)|\mu_f)\sum_{a\in\mathcal{A}}\pi(a|f(s))\cdot\gamma^tr_f(f(s),a)\\
			&\overset{f(s)=o^f,f(s_t)=o_t^f}{=}\sum_{t=0}^{\infty}\sum_{o^f\in\mathcal{O}_f}\mathbb{P}(o_t^f=o^f|\pi)\sum_{a\in\mathcal{A}}\pi(a|o^f)\cdot\gamma^tr_f(o^f,a)\\
			&=\mathbb{E}_{\tau_f\sim\pi}\left[\sum_{t=0}^{\infty}\gamma^tr_f(o_t^f,a_t)\right],\\
		\end{split}
	\end{equation}
	concluding the proof.
\end{proof}

\subsection{Proof of Theorem \ref{theorem 2}}\label{proof 2}
\textbf{Theorem \ref{theorem 2}.} \textit{
	Given any two policies, $\tilde{\pi}$ and $\pi$, the following bound holds:
	\begin{equation}
		\begin{split}
			\zeta(\tilde{\pi})&\geq L_{\pi}(\tilde{\pi})-\frac{2r_{\max}(1-Z)}{1-\gamma}-\frac{2\gamma\epsilon_{\mathrm{train}}}{(1-\gamma)^2}\mathop{\mathbb{E}}_{\substack{f\sim p_{\mathrm{train}}(\cdot)\\s\sim d^{\mu_f}(\cdot)}}\left[D_{\mathrm{TV}}(\tilde{\mu}_f\Vert\mu_f)[s]\right]\\
			&-\frac{2\delta_{\mathrm{train}}(1-Z)}{1-\gamma}\mathop{\mathbb{E}}_{\substack{f\sim p_{\mathrm{train}}(\cdot)\\s\sim d^{\tilde{\mu}_f}(\cdot)}}\left[D_{\mathrm{TV}}(\tilde{\mu}_f\Vert\mu_f)[s]\right]-\frac{2\delta_{\mathrm{eval}}(1-Z)}{1-\gamma}\mathop{\mathbb{E}}_{\substack{f\sim p_{\mathrm{eval}}(\cdot)\\s\sim d^{\tilde{\mu}_f}(\cdot)}}\left[D_{\mathrm{TV}}(\tilde{\mu}_f\Vert\mu_f)[s]\right].\\
		\end{split}
	\end{equation}
}
\begin{proof}
	Let's start with the first-order approximation of the training performance \citep{schulman2015trust}, denote it as
	\begin{equation}\label{first-order}
		L_{\pi}(\tilde{\pi})=\eta(\pi)+\frac{1}{1-\gamma}\mathop{\mathbb{E}}_{\substack{f\sim p_{\mathrm{train}}(\cdot)\\s\sim d^{\mu_f}(\cdot)\\a\sim\tilde{\mu}_f(\cdot|s)}}\left[A^{\mu_f}(s,a)\right].
	\end{equation}
	Then, we are trying to bound the difference between $\zeta(\tilde{\pi})$ and $L_{\pi}(\tilde{\pi})$, according to Lemma \ref{lemma 1}, that is,
	\begin{equation}\label{first and second terms}
		\begin{split}
			&\left\vert\zeta(\tilde{\pi})-L_{\pi}(\tilde{\pi})\right\vert\\
			=&\left\vert\zeta(\pi)-\eta(\pi)+\frac{1}{1-\gamma}\mathop{\mathbb{E}}_{\substack{f\sim p(\cdot)\\s\sim d^{\tilde{\mu}_f}(\cdot)\\a\sim\tilde{\mu}_f(\cdot|s)}}\left[A^{\mu_f}(s,a)\right]-\frac{1}{1-\gamma}\mathop{\mathbb{E}}_{\substack{f\sim p_{\mathrm{train}}(\cdot)\\s\sim d^{\mu_f}(\cdot)\\a\sim\tilde{\mu}_f(\cdot|s)}}\left[A^{\mu_f}(s,a)\right]\right\vert \\
			=&\frac{1}{1-\gamma}\left\vert\mathop{\mathbb{E}}_{\substack{f\sim p(\cdot)\\s\sim d^{\mu_f}(\cdot)\\a\sim\mu_f(\cdot|s)}}\left[r(s,a)\right]-\mathop{\mathbb{E}}_{\substack{f\sim p_{\mathrm{train}}(\cdot)\\s\sim d^{\mu_f}(\cdot)\\a\sim\mu_f(\cdot|s)}}\left[r(s,a)\right]+\mathop{\mathbb{E}}_{\substack{f\sim p(\cdot)\\s\sim d^{\tilde{\mu}_f}(\cdot)\\a\sim\tilde{\mu}_f(\cdot|s)}}\left[A^{\mu_f}(s,a)\right]-\mathop{\mathbb{E}}_{\substack{f\sim p_{\mathrm{train}}(\cdot)\\s\sim d^{\mu_f}(\cdot)\\a\sim\tilde{\mu}_f(\cdot|s)}}\left[A^{\mu_f}(s,a)\right]\right\vert \\
			\leq&\frac{1}{1-\gamma}\left\{\left\vert\mathop{\mathbb{E}}_{\substack{f\sim p(\cdot)\\s\sim d^{\mu_f}(\cdot)\\a\sim\mu_f(\cdot|s)}}\left[r(s,a)\right]-\mathop{\mathbb{E}}_{\substack{f\sim p_{\mathrm{train}}(\cdot)\\s\sim d^{\mu_f}(\cdot)\\a\sim\mu_f(\cdot|s)}}\left[r(s,a)\right]\right\vert+\left\vert\mathop{\mathbb{E}}_{\substack{f\sim p(\cdot)\\s\sim d^{\tilde{\mu}_f}(\cdot)\\a\sim\tilde{\mu}_f(\cdot|s)}}\left[A^{\mu_f}(s,a)\right]-\mathop{\mathbb{E}}_{\substack{f\sim p_{\mathrm{train}}(\cdot)\\s\sim d^{\mu_f}(\cdot)\\a\sim\tilde{\mu}_f(\cdot|s)}}\left[A^{\mu_f}(s,a)\right]\right\vert\right\}. \\
		\end{split}
	\end{equation}
	We can bound these two terms separately. Simplifying the notation, denote $g(f)=\mathop{\mathbb{E}}_{s\sim d^{\mu_f}(\cdot),a\sim\mu_f(\cdot|s)}\left[r(s,a)\right]$, we can thus rewrite the first term as
	\begin{equation}
		\left\vert\mathop{\mathbb{E}}_{\substack{f\sim p(\cdot)\\s\sim d^{\mu_f}(\cdot)\\a\sim\mu_f(\cdot|s)}}\left[r(s,a)\right]-\mathop{\mathbb{E}}_{\substack{f\sim p_{\mathrm{train}}(\cdot)\\s\sim d^{\mu_f}(\cdot)\\a\sim\mu_f(\cdot|s)}}\left[r(s,a)\right]\right\vert=\left\vert\mathop{\mathbb{E}}_{f\sim p(\cdot)}\left[g(f)\right]-\mathop{\mathbb{E}}_{f\sim p_{\mathrm{train}}(\cdot)}\left[g(f)\right]\right\vert,
	\end{equation}
	then
	\begin{equation}\label{g(f) 1}
		\left\vert\mathop{\mathbb{E}}_{f\sim p(\cdot)}\left[g(f)\right]-\mathop{\mathbb{E}}_{f\sim p_{\mathrm{train}}(\cdot)}\left[g(f)\right]\right\vert=\left\vert\int_{\mathcal{F}}p(f)\cdot g(f)\mathrm{d}f-\int_{\mathcal{F}_{\mathrm{train}}}p_{\mathrm{train}}(f)\cdot g(f)\mathrm{d}f\right\vert.
	\end{equation}
	Next, according to Assumption \ref{assumption 3},
	\begin{equation}\label{g(f) 2}
		\begin{split}
			&\left\vert\int_{\mathcal{F}}p(f)\cdot g(f)\mathrm{d}f-\int_{\mathcal{F}_{\mathrm{train}}}p_{\mathrm{train}}(f)\cdot g(f)\mathrm{d}f\right\vert=\left\vert\int_{\mathcal{F}}p(f)\cdot g(f)\mathrm{d}f-\int_{\mathcal{F}_{\mathrm{train}}}\frac{p(f)}{Z}\cdot g(f)\mathrm{d}f\right\vert \\
			=&\left\vert\int_{\mathcal{F}_{\mathrm{train}}}p(f)\cdot g(f)\mathrm{d}f-\int_{\mathcal{F}_{\mathrm{train}}}\frac{p(f)}{Z}\cdot g(f)\mathrm{d}f+\int_{\mathcal{F}-\mathcal{F}_{\mathrm{train}}}p(f)\cdot g(f)\mathrm{d}f\right\vert \\
			=&\left\vert\int_{\mathcal{F}_{\mathrm{train}}}\frac{Z-1}{Z}p(f)\cdot g(f)\mathrm{d}f+\int_{\mathcal{F}-\mathcal{F}_{\mathrm{train}}}p(f)\cdot g(f)\mathrm{d}f\right\vert, \\
		\end{split}
	\end{equation}
	where $Z=\int_{\mathcal{F}_{\mathrm{train}}}p(f)\mathrm{d}f\leq1$, thus,
	\begin{equation}\label{g(f) 3}
		\begin{split}
			&\left\vert\int_{\mathcal{F}_{\mathrm{train}}}\frac{Z-1}{Z}p(f)\cdot g(f)\mathrm{d}f+\int_{\mathcal{F}-\mathcal{F}_{\mathrm{train}}}p(f)\cdot g(f)\mathrm{d}f\right\vert \\
			\leq&\left\vert\int_{\mathcal{F}_{\mathrm{train}}}\frac{Z-1}{Z}p(f)\cdot g(f)\mathrm{d}f\right\vert+\left\vert\int_{\mathcal{F}-\mathcal{F}_{\mathrm{train}}}p(f)\cdot g(f)\mathrm{d}f\right\vert \\
			\leq&\frac{1-Z}{Z}\left\vert\int_{\mathcal{F}_{\mathrm{train}}}p(f)\cdot g(f)\mathrm{d}f\right\vert+\left\vert\int_{\mathcal{F}-\mathcal{F}_{\mathrm{train}}}p(f)\cdot g(f)\mathrm{d}f\right\vert. \\
		\end{split}
	\end{equation}
	Meanwhile,
	\begin{equation}
		\begin{split}
			\left\vert g(f)\right\vert=\left\vert\mathop{\mathbb{E}}_{\substack{s\sim d^{\mu_f}(\cdot)\\a\sim\mu_f(\cdot|s)}}\left[r(s,a)\right]\right\vert=&\left\vert\sum_{s\in\mathcal{S}}(1-\gamma)\sum_{t=0}^{\infty}\gamma^t\mathbb{P}(s_t=s|\mu_f)\sum_{a\in\mathcal{A}}\mu_f(a|s)\cdot r(s,a)\right\vert\\
			\leq&(1-\gamma)\sum_{t=0}^{\infty}\sum_{s\in\mathcal{S}}\mathbb{P}(s_t=s|\mu_f)\sum_{a\in\mathcal{A}}\mu_f(a|s)\cdot\gamma^t\left\vert r(s,a)\right\vert \\
			\leq&(1-\gamma)\sum_{t=0}^{\infty}\gamma^tr_{\max}=r_{\max}, \\
		\end{split}
	\end{equation}
	where $r_{\max}=\max_{s,a}\left\vert r(s,a)\right\vert$, then we can bound the first term as
	\begin{equation}\label{g(f) 4}
		\begin{split}
			\left\vert\mathop{\mathbb{E}}_{\substack{f\sim p(\cdot)\\s\sim d^{\mu_f}(\cdot)\\a\sim\mu_f(\cdot|s)}}\left[r(s,a)\right]-\mathop{\mathbb{E}}_{\substack{f\sim p_{\mathrm{train}}(\cdot)\\s\sim d^{\mu_f}(\cdot)\\a\sim\mu_f(\cdot|s)}}\left[r(s,a)\right]\right\vert\leq&\frac{1-Z}{Z}\left\vert\int_{\mathcal{F}_{\mathrm{train}}}p(f)\cdot g(f)\mathrm{d}f\right\vert+\left\vert\int_{\mathcal{F}-\mathcal{F}_{\mathrm{train}}}p(f)\cdot g(f)\mathrm{d}f\right\vert\\
			\leq&\frac{1-Z}{Z}\int_{\mathcal{F}_{\mathrm{train}}}p(f)\cdot \left\vert g(f)\right\vert\mathrm{d}f+\int_{\mathcal{F}-\mathcal{F}_{\mathrm{train}}}p(f)\cdot\left\vert g(f)\right\vert\mathrm{d}f\\
			\leq&\frac{(1-Z)r_{\max}}{Z}\int_{\mathcal{F}_{\mathrm{train}}}p(f)\mathrm{d}f+r_{\max}\int_{\mathcal{F}-\mathcal{F}_{\mathrm{train}}}p(f)\mathrm{d}f\\
			=&\frac{(1-Z)r_{\max}}{Z}\cdot Z+r_{\max}\cdot(1-Z)=2r_{\max}(1-Z).\\
		\end{split}
	\end{equation}
	
	Now we are trying to bound the second term, which can be expressed as
	\begin{equation}\label{second term}
		\begin{split}
			&\left\vert\mathop{\mathbb{E}}_{\substack{f\sim p(\cdot)\\s\sim d^{\tilde{\mu}_f}(\cdot)\\a\sim\tilde{\mu}_f(\cdot|s)}}\left[A^{\mu_f}(s,a)\right]-\mathop{\mathbb{E}}_{\substack{f\sim p_{\mathrm{train}}(\cdot)\\s\sim d^{\mu_f}(\cdot)\\a\sim\tilde{\mu}_f(\cdot|s)}}\left[A^{\mu_f}(s,a)\right]\right\vert\\
			=&\left\vert\mathop{\mathbb{E}}_{\substack{f\sim p(\cdot)\\s\sim d^{\tilde{\mu}_f}(\cdot)\\a\sim\tilde{\mu}_f(\cdot|s)}}\left[A^{\mu_f}(s,a)\right]-\mathop{\mathbb{E}}_{\substack{f\sim p_{\mathrm{train}}(\cdot)\\s\sim d^{\tilde{\mu}_f}(\cdot)\\a\sim\tilde{\mu}_f(\cdot|s)}}\left[A^{\mu_f}(s,a)\right]+\mathop{\mathbb{E}}_{\substack{f\sim p_{\mathrm{train}}(\cdot)\\s\sim d^{\tilde{\mu}_f}(\cdot)\\a\sim\tilde{\mu}_f(\cdot|s)}}\left[A^{\mu_f}(s,a)\right]-\mathop{\mathbb{E}}_{\substack{f\sim p_{\mathrm{train}}(\cdot)\\s\sim d^{\mu_f}(\cdot)\\a\sim\tilde{\mu}_f(\cdot|s)}}\left[A^{\mu_f}(s,a)\right]\right\vert \\
			\leq&\underbrace{\left\vert\mathop{\mathbb{E}}_{\substack{f\sim p(\cdot)\\s\sim d^{\tilde{\mu}_f}(\cdot)\\a\sim\tilde{\mu}_f(\cdot|s)}}\left[A^{\mu_f}(s,a)\right]-\mathop{\mathbb{E}}_{\substack{f\sim p_{\mathrm{train}}(\cdot)\\s\sim d^{\tilde{\mu}_f}(\cdot)\\a\sim\tilde{\mu}_f(\cdot|s)}}\left[A^{\mu_f}(s,a)\right]\right\vert}_{\text{denote as } \Phi}+\underbrace{\left\vert\mathop{\mathbb{E}}_{\substack{f\sim p_{\mathrm{train}}(\cdot)\\s\sim d^{\tilde{\mu}_f}(\cdot)\\a\sim\tilde{\mu}_f(\cdot|s)}}\left[A^{\mu_f}(s,a)\right]-\mathop{\mathbb{E}}_{\substack{f\sim p_{\mathrm{train}}(\cdot)\\s\sim d^{\mu_f}(\cdot)\\a\sim\tilde{\mu}_f(\cdot|s)}}\left[A^{\mu_f}(s,a)\right]\right\vert}_{\text{denote as } \Psi}. \\
		\end{split}
	\end{equation}
	Using Corollary \ref{corollary 1}, $\Psi$ can be bounded by
	\begin{equation}
		\begin{split}
			\Psi&=\left\vert\mathop{\mathbb{E}}_{f\sim p_{\mathrm{train}}(\cdot)}\left\{\mathop{\mathbb{E}}_{\substack{s\sim d^{\tilde{\mu}_f}(\cdot)\\a\sim\tilde{\mu}_f(\cdot|s)}}\left[A^{\mu_f}(s,a)\right]-\mathop{\mathbb{E}}_{\substack{s\sim d^{\mu_f}(\cdot)\\a\sim\tilde{\mu}_f(\cdot|s)}}\left[A^{\mu_f}(s,a)\right]\right\}\right\vert\\
			&\leq\mathop{\mathbb{E}}_{f\sim p_{\mathrm{train}}(\cdot)}\left\{\left\vert\mathop{\mathbb{E}}_{\substack{s\sim d^{\tilde{\mu}_f}(\cdot)\\a\sim\tilde{\mu}_f(\cdot|s)}}\left[A^{\mu_f}(s,a)\right]-\mathop{\mathbb{E}}_{\substack{s\sim d^{\mu_f}(\cdot)\\a\sim\tilde{\mu}_f(\cdot|s)}}\left[A^{\mu_f}(s,a)\right]\right\vert\right\}\\
			&\leq\mathop{\mathbb{E}}_{f\sim p_{\mathrm{train}}(\cdot)}\left\{\frac{2\epsilon\gamma}{1-\gamma}\mathop{\mathbb{E}}_{s\sim d^{\mu_f}(\cdot)}\left[D_{\mathrm{TV}}(\tilde{\mu}_f\Vert\mu_f)[s]\right]\right\},\\
		\end{split}
	\end{equation}
	where $\epsilon=\max_s\left\vert\mathbb{E}_{a\sim\tilde{\mu}_f(\cdot|s)}\left[A^{\mu_f}(s,a)\right]\right\vert$, denote $\epsilon_{\mathrm{train}}=\max_{f\in\mathcal{F}_{\mathrm{train}}}\left\{\epsilon\right\}$, we obtain
	\begin{equation}\label{Psi}
		\Psi\leq\frac{2\gamma\epsilon_{\mathrm{train}}}{1-\gamma}\mathop{\mathbb{E}}_{\substack{f\sim p_{\mathrm{train}}(\cdot)\\s\sim d^{\mu_f}(\cdot)}}\left[D_{\mathrm{TV}}(\tilde{\mu}_f\Vert\mu_f)[s]\right].
	\end{equation}
	Next, with a little abuse of notation $g(f)$, denote
	\begin{equation}
		g(f)=\mathop{\mathbb{E}}_{\substack{s\sim d^{\tilde{\mu}_f}(\cdot)\\a\sim\tilde{\mu}_f(\cdot|s)}}\left[A^{\mu_f}(s,a)\right],
	\end{equation}
	we can rewrite $\Phi$ as
	\begin{equation}
		\Phi=\left\vert\mathop{\mathbb{E}}_{f\sim p(\cdot)}\left[g(f)\right]-\mathop{\mathbb{E}}_{f\sim p_{\mathrm{train}}(\cdot)}\left[g(f)\right]\right\vert,
	\end{equation}
	then, similar to (\ref{g(f) 1}), (\ref{g(f) 2}), (\ref{g(f) 3}) and (\ref{g(f) 4}),
	\begin{equation}
		\Phi\leq\frac{1-Z}{Z}\int_{\mathcal{F}_{\mathrm{train}}}p(f)\cdot \left\vert g(f)\right\vert\mathrm{d}f+\int_{\mathcal{F}-\mathcal{F}_{\mathrm{train}}}p(f)\cdot\left\vert g(f)\right\vert\mathrm{d}f.
	\end{equation}
	According to Lemma \ref{lemma 3}, we can bound $g(f)$, which can be expressed as
	\begin{equation}
		g(f)=\mathop{\mathbb{E}}_{\substack{s\sim d^{\tilde{\mu}_f}(\cdot)\\a\sim\tilde{\mu}_f(\cdot|s)}}\left[A^{\mu_f}(s,a)\right]=\mathop{\mathbb{E}}_{s\sim d^{\tilde{\mu}_f}(\cdot)}\left\{\mathop{\mathbb{E}}_{a\sim\tilde{\mu}_f(\cdot|s)}\left[A^{\mu_f}(s,a)\right]\right\},
	\end{equation}
	thus,
	\begin{equation}
		\left\vert g(f)\right\vert\leq\mathop{\mathbb{E}}_{s\sim d^{\tilde{\mu}_f}(\cdot)}\left\{\left\vert\mathop{\mathbb{E}}_{a\sim\tilde{\mu}_f(\cdot|s)}\left[A^{\mu_f}(s,a)\right]\right\vert\right\}\leq\mathop{\mathbb{E}}_{s\sim d^{\tilde{\mu}_f}(\cdot)}\left\{2D_{\mathrm{TV}}(\tilde{\mu}_f\Vert\mu_f)[s]\cdot\max_a\left\vert A^{\mu_f}(s,a)\right\vert\right\}.
	\end{equation}
	Denote $\delta=\max_{s,a}\left\vert A^{\mu_f}(s,a)\right\vert$, then we have
	\begin{equation}
		\left\vert g(f)\right\vert\leq2\delta\mathop{\mathbb{E}}_{s\sim d^{\tilde{\mu}_f}(\cdot)}\left[D_{\mathrm{TV}}(\tilde{\mu}_f\Vert\mu_f)[s]\right],
	\end{equation}
	which means that
	\begin{equation}\label{Phi}
		\begin{split}
			\Phi\leq&\frac{1-Z}{Z}\int_{\mathcal{F}_{\mathrm{train}}}p(f)\cdot \left\vert g(f)\right\vert\mathrm{d}f+\int_{\mathcal{F}-\mathcal{F}_{\mathrm{train}}}p(f)\cdot\left\vert g(f)\right\vert\mathrm{d}f\\
			\leq&\frac{2\delta_{\mathrm{train}}(1-Z)}{Z}\int_{\mathcal{F}_{\mathrm{train}}}p(f)\cdot\mathop{\mathbb{E}}_{s\sim d^{\tilde{\mu}_f}(\cdot)}\left[D_{\mathrm{TV}}(\tilde{\mu}_f\Vert\mu_f)[s]\right]\mathrm{d}f\\
			&+2\delta_{\mathrm{eval}}\int_{\mathcal{F}-\mathcal{F}_{\mathrm{train}}}p(f)\cdot\mathop{\mathbb{E}}_{s\sim d^{\tilde{\mu}_f}(\cdot)}\left[D_{\mathrm{TV}}(\tilde{\mu}_f\Vert\mu_f)[s]\right]\mathrm{d}f\\
			=&2\delta_{\mathrm{train}}(1-Z)\int_{\mathcal{F}_{\mathrm{train}}}\frac{p(f)}{Z}\cdot\mathop{\mathbb{E}}_{s\sim d^{\tilde{\mu}_f}(\cdot)}\left[D_{\mathrm{TV}}(\tilde{\mu}_f\Vert\mu_f)[s]\right]\mathrm{d}f\\
			&+2\delta_{\mathrm{eval}}(1-Z)\int_{\mathcal{F}-\mathcal{F}_{\mathrm{train}}}\frac{p(f)}{1-Z}\cdot\mathop{\mathbb{E}}_{s\sim d^{\tilde{\mu}_f}(\cdot)}\left[D_{\mathrm{TV}}(\tilde{\mu}_f\Vert\mu_f)[s]\right]\mathrm{d}f\\
			=&2\delta_{\mathrm{train}}(1-Z)\mathop{\mathbb{E}}_{\substack{f\sim p_{\mathrm{train}}(\cdot)\\s\sim d^{\tilde{\mu}_f}(\cdot)}}\left[D_{\mathrm{TV}}(\tilde{\mu}_f\Vert\mu_f)[s]\right]+2\delta_{\mathrm{eval}}(1-Z)\mathop{\mathbb{E}}_{\substack{f\sim p_{\mathrm{eval}}(\cdot)\\s\sim d^{\tilde{\mu}_f}(\cdot)}}\left[D_{\mathrm{TV}}(\tilde{\mu}_f\Vert\mu_f)[s]\right],\\
		\end{split}
	\end{equation}
	where $\delta_{\mathrm{train}}=\max_{f\in\mathcal{F}_{\mathrm{train}}}\left\{\max_{s,a}\left\vert A^{\mu_f}(s,a)\right\vert\right\}$ and $\delta_{\mathrm{eval}}=\max_{f\in\mathcal{F}_{\mathrm{eval}}}\left\{\max_{s,a}\left\vert A^{\mu_f}(s,a)\right\vert\right\}$.
	
	Finally, combining (\ref{first and second terms}), (\ref{g(f) 4}), (\ref{second term}), (\ref{Psi}), and (\ref{Phi}), we have
	\begin{equation}
		\begin{split}
			\left\vert\zeta(\tilde{\pi})-L_{\pi}(\tilde{\pi})\right\vert&\leq\frac{2r_{\max}(1-Z)}{1-\gamma}+\frac{2\gamma\epsilon_{\mathrm{train}}}{(1-\gamma)^2}\mathop{\mathbb{E}}_{\substack{f\sim p_{\mathrm{train}}(\cdot)\\s\sim d^{\mu_f}(\cdot)}}\left[D_{\mathrm{TV}}(\tilde{\mu}_f\Vert\mu_f)[s]\right]\\
			&+\frac{2\delta_{\mathrm{train}}(1-Z)}{1-\gamma}\mathop{\mathbb{E}}_{\substack{f\sim p_{\mathrm{train}}(\cdot)\\s\sim d^{\tilde{\mu}_f}(\cdot)}}\left[D_{\mathrm{TV}}(\tilde{\mu}_f\Vert\mu_f)[s]\right]+\frac{2\delta_{\mathrm{eval}}(1-Z)}{1-\gamma}\mathop{\mathbb{E}}_{\substack{f\sim p_{\mathrm{eval}}(\cdot)\\s\sim d^{\tilde{\mu}_f}(\cdot)}}\left[D_{\mathrm{TV}}(\tilde{\mu}_f\Vert\mu_f)[s]\right],\\
		\end{split}
	\end{equation}
	thus, the generalization performance lower bound is
	\begin{equation}
		\begin{split}
			\zeta(\tilde{\pi})&\geq L_{\pi}(\tilde{\pi})-\frac{2r_{\max}(1-Z)}{1-\gamma}-\frac{2\gamma\epsilon_{\mathrm{train}}}{(1-\gamma)^2}\mathop{\mathbb{E}}_{\substack{f\sim p_{\mathrm{train}}(\cdot)\\s\sim d^{\mu_f}(\cdot)}}\left[D_{\mathrm{TV}}(\tilde{\mu}_f\Vert\mu_f)[s]\right]\\
			&-\frac{2\delta_{\mathrm{train}}(1-Z)}{1-\gamma}\mathop{\mathbb{E}}_{\substack{f\sim p_{\mathrm{train}}(\cdot)\\s\sim d^{\tilde{\mu}_f}(\cdot)}}\left[D_{\mathrm{TV}}(\tilde{\mu}_f\Vert\mu_f)[s]\right]-\frac{2\delta_{\mathrm{eval}}(1-Z)}{1-\gamma}\mathop{\mathbb{E}}_{\substack{f\sim p_{\mathrm{eval}}(\cdot)\\s\sim d^{\tilde{\mu}_f}(\cdot)}}\left[D_{\mathrm{TV}}(\tilde{\mu}_f\Vert\mu_f)[s]\right],\\
		\end{split}
	\end{equation}
	concluding the proof.
\end{proof}

\subsection{Proof of Theorem \ref{theorem 1}}\label{proof 1}
\textbf{Theorem \ref{theorem 1}.} \textit{
	Given any two policies, $\tilde{\pi}$ and $\pi$, the following bound holds:
	\begin{equation}
		\eta(\tilde{\pi})\geq L_{\pi}(\tilde{\pi})-\frac{2\gamma\epsilon_{\mathrm{train}}}{(1-\gamma)^2}\mathop{\mathbb{E}}_{\substack{f\sim p_{\mathrm{train}}(\cdot)\\s\sim d^{\mu_f}(\cdot)}}\left[D_{\mathrm{TV}}(\tilde{\mu}_f\Vert\mu_f)[s]\right].
	\end{equation}
}
\begin{proof}
	Since
	\begin{equation}
		\begin{split}
			\left\vert\eta(\tilde{\pi})-L_{\pi}(\tilde{\pi})\right\vert&=\frac{1}{1-\gamma}\left\vert\mathop{\mathbb{E}}_{\substack{f\sim p_{\mathrm{train}}(\cdot)\\s\sim d^{\tilde{\mu}_f}(\cdot)\\a\sim\tilde{\mu}_f(\cdot|s)}}\left[A^{\mu_f}(s,a)\right]-\mathop{\mathbb{E}}_{\substack{f\sim p_{\mathrm{train}}(\cdot)\\s\sim d^{\mu_f}(\cdot)\\a\sim\tilde{\mu}_f(\cdot|s)}}\left[A^{\mu_f}(s,a)\right]\right\vert=\frac{\Psi}{1-\gamma} \\
			&\leq\frac{2\gamma\epsilon_{\mathrm{train}}}{(1-\gamma)^2}\mathop{\mathbb{E}}_{\substack{f\sim p_{\mathrm{train}}(\cdot)\\s\sim d^{\mu_f}(\cdot)}}\left[D_{\mathrm{TV}}(\tilde{\mu}_f\Vert\mu_f)[s]\right],
		\end{split}
	\end{equation}
	thus,
	\begin{equation}
		\eta(\tilde{\pi})\geq L_{\pi}(\tilde{\pi})-\frac{2\gamma\epsilon_{\mathrm{train}}}{(1-\gamma)^2}\mathop{\mathbb{E}}_{\substack{f\sim p_{\mathrm{train}}(\cdot)\\s\sim d^{\mu_f}(\cdot)}}\left[D_{\mathrm{TV}}(\tilde{\mu}_f\Vert\mu_f)[s]\right],
	\end{equation}
	concluding the proof.
\end{proof}

\subsection{Proof of Theorem \ref{theorem 3}}\label{proof 3}
\textbf{Theorem \ref{theorem 3}.} \textit{
	Given any two policies, $\tilde{\pi}$ and $\pi$, the following bound holds:
	\begin{equation}
		\mathfrak{D}_1\leq\left(1+\frac{2\gamma\sigma_{\mathrm{train}}}{1-\gamma}\right)\mathfrak{D}_{\mathrm{train}},\enspace\mathfrak{D}_2\leq\left(1+\frac{2\gamma\sigma_{\mathrm{eval}}}{1-\gamma}\right)\underbrace{\mathop{\mathbb{E}}_{\substack{f\sim p_{\mathrm{eval}}(\cdot)\\s\sim d^{\mu_f}(\cdot)}}\left[D_{\mathrm{TV}}(\tilde{\mu}_f\Vert\mu_f)[s]\right]}_{\text{denote it as }\mathfrak{D}_{\mathrm{eval}}},
	\end{equation}
	where $\sigma_{\mathrm{train}}=\max_{f\in\mathcal{F}_{\mathrm{train}}}\left\{D_{\mathrm{TV}}^{\max}(\tilde{\mu}_f\Vert\mu_f)[s]\right\}$ and $\sigma_{\mathrm{eval}}=\max_{f\in\mathcal{F}_{\mathrm{eval}}}\left\{D_{\mathrm{TV}}^{\max}(\tilde{\mu}_f\Vert\mu_f)[s]\right\}$, $D_{\mathrm{TV}}^{\max}(\tilde{\mu}_f\Vert\mu_f)[s]$ represents $\max_{s}D_{\mathrm{TV}}(\tilde{\mu}_f\Vert\mu_f)[s]$.
}
\begin{proof}
	According to Lemma \ref{lemma 2}, we have
	\begin{equation}
		\begin{split}
			\left\vert\mathfrak{D}_1-\mathfrak{D}_{\mathrm{train}}\right\vert&=\left\vert\mathop{\mathbb{E}}_{\substack{f\sim p_{\mathrm{train}}(\cdot)\\s\sim d^{\tilde{\mu}_f}(\cdot)}}\left[D_{\mathrm{TV}}(\tilde{\mu}_f\Vert\mu_f)[s]\right]-\mathop{\mathbb{E}}_{\substack{f\sim p_{\mathrm{train}}(\cdot)\\s\sim d^{\mu_f}(\cdot)}}\left[D_{\mathrm{TV}}(\tilde{\mu}_f\Vert\mu_f)[s]\right]\right\vert\\
			&=\left\vert\mathop{\mathbb{E}}_{f\sim p_{\mathrm{train}}(\cdot)}\left\{\mathop{\mathbb{E}}_{s\sim d^{\tilde{\mu}_f}(\cdot)}\left[D_{\mathrm{TV}}(\tilde{\mu}_f\Vert\mu_f)[s]\right]-\mathop{\mathbb{E}}_{s\sim d^{\mu_f}(\cdot)}\left[D_{\mathrm{TV}}(\tilde{\mu}_f\Vert\mu_f)[s]\right]\right\}\right\vert\\
			&\leq\mathop{\mathbb{E}}_{f\sim p_{\mathrm{train}}(\cdot)}\left\{\left\vert\mathop{\mathbb{E}}_{s\sim d^{\tilde{\mu}_f}(\cdot)}\left[D_{\mathrm{TV}}(\tilde{\mu}_f\Vert\mu_f)[s]\right]-\mathop{\mathbb{E}}_{s\sim d^{\mu_f}(\cdot)}\left[D_{\mathrm{TV}}(\tilde{\mu}_f\Vert\mu_f)[s]\right]\right\vert\right\}\\
			&\leq\mathop{\mathbb{E}}_{f\sim p_{\mathrm{train}}(\cdot)}\left\{\left\Vert d^{\tilde{\mu}_f}-d^{\mu_f}\right\Vert_1\cdot\left\Vert D_{\mathrm{TV}}(\tilde{\mu}_f\Vert\mu_f)[s]\right\Vert_{\infty}\right\}\\
			&\leq\mathop{\mathbb{E}}_{f\sim p_{\mathrm{train}}(\cdot)}\left\{\frac{2\gamma}{1-\gamma}\mathop{\mathbb{E}}_{s\sim d^{\mu_f}(\cdot)}\left[D_{\mathrm{TV}}(\tilde{\mu}_f\Vert\mu_f)[s]\right]\cdot\max_{s}D_{\mathrm{TV}}(\tilde{\mu}_f\Vert\mu_f)[s]\right\}\\
			&\leq\frac{2\gamma\sigma_{\mathrm{train}}}{1-\gamma}\mathop{\mathbb{E}}_{\substack{f\sim p_{\mathrm{train}}(\cdot)\\s\sim d^{\mu_f}(\cdot)}}\left[D_{\mathrm{TV}}(\tilde{\mu}_f\Vert\mu_f)[s]\right]=\frac{2\gamma\sigma_{\mathrm{train}}}{1-\gamma}\cdot\mathfrak{D}_{\mathrm{train}},\\
		\end{split}
	\end{equation}
	as a result,
	\begin{equation}
		\mathfrak{D}_1\leq\left(1+\frac{2\gamma\sigma_{\mathrm{train}}}{1-\gamma}\right)\mathfrak{D}_{\mathrm{train}}.
	\end{equation}
	Similarly, using Lemma \ref{lemma 2} again, we have
	\begin{equation}
		\begin{split}
			\left\vert\mathfrak{D}_2-\mathfrak{D}_{\mathrm{eval}}\right\vert&=\left\vert\mathop{\mathbb{E}}_{\substack{f\sim p_{\mathrm{eval}}(\cdot)\\s\sim d^{\tilde{\mu}_f}(\cdot)}}\left[D_{\mathrm{TV}}(\tilde{\mu}_f\Vert\mu_f)[s]\right]-\mathop{\mathbb{E}}_{\substack{f\sim p_{\mathrm{eval}}(\cdot)\\s\sim d^{\mu_f}(\cdot)}}\left[D_{\mathrm{TV}}(\tilde{\mu}_f\Vert\mu_f)[s]\right]\right\vert\\
			&=\left\vert\mathop{\mathbb{E}}_{f\sim p_{\mathrm{eval}}(\cdot)}\left\{\mathop{\mathbb{E}}_{s\sim d^{\tilde{\mu}_f}(\cdot)}\left[D_{\mathrm{TV}}(\tilde{\mu}_f\Vert\mu_f)[s]\right]-\mathop{\mathbb{E}}_{s\sim d^{\mu_f}(\cdot)}\left[D_{\mathrm{TV}}(\tilde{\mu}_f\Vert\mu_f)[s]\right]\right\}\right\vert\\
			&\leq\mathop{\mathbb{E}}_{f\sim p_{\mathrm{eval}}(\cdot)}\left\{\left\vert\mathop{\mathbb{E}}_{s\sim d^{\tilde{\mu}_f}(\cdot)}\left[D_{\mathrm{TV}}(\tilde{\mu}_f\Vert\mu_f)[s]\right]-\mathop{\mathbb{E}}_{s\sim d^{\mu_f}(\cdot)}\left[D_{\mathrm{TV}}(\tilde{\mu}_f\Vert\mu_f)[s]\right]\right\vert\right\}\\
			&\leq\mathop{\mathbb{E}}_{f\sim p_{\mathrm{eval}}(\cdot)}\left\{\left\Vert d^{\tilde{\mu}_f}-d^{\mu_f}\right\Vert_1\cdot\left\Vert D_{\mathrm{TV}}(\tilde{\mu}_f\Vert\mu_f)[s]\right\Vert_{\infty}\right\}\\
			&\leq\mathop{\mathbb{E}}_{f\sim p_{\mathrm{eval}}(\cdot)}\left\{\frac{2\gamma}{1-\gamma}\mathop{\mathbb{E}}_{s\sim d^{\mu_f}(\cdot)}\left[D_{\mathrm{TV}}(\tilde{\mu}_f\Vert\mu_f)[s]\right]\cdot\max_{s}D_{\mathrm{TV}}(\tilde{\mu}_f\Vert\mu_f)[s]\right\}\\
			&\leq\frac{2\gamma\sigma_{\mathrm{eval}}}{1-\gamma}\mathop{\mathbb{E}}_{\substack{f\sim p_{\mathrm{eval}}(\cdot)\\s\sim d^{\mu_f}(\cdot)}}\left[D_{\mathrm{TV}}(\tilde{\mu}_f\Vert\mu_f)[s]\right]=\frac{2\gamma\sigma_{\mathrm{eval}}}{1-\gamma}\cdot\mathfrak{D}_{\mathrm{eval}},\\
		\end{split}
	\end{equation}
	as a result,
	\begin{equation}
		\mathfrak{D}_2\leq\left(1+\frac{2\gamma\sigma_{\mathrm{eval}}}{1-\gamma}\right)\mathfrak{D}_{\mathrm{eval}},
	\end{equation}
	concluding the proof.
\end{proof}

\subsection{Proof of Theorem \ref{theorem 5}}\label{proof 5}
\textbf{Theorem \ref{theorem 5}.} \textit{
	Given any two policies, $\tilde{\pi}$ and $\pi$, assume that $\tilde{\pi}$ is $\mathcal{R}_{\tilde{\pi}}$-robust, and $\pi$ is $\mathcal{R}_{\pi}$-robust, then the following bound holds:
	\begin{equation}
		\mathfrak{D}_{\mathrm{eval}}\leq\left(1+\frac{2\gamma\sigma_{\mathrm{train}}}{1-\gamma}\right)\mathcal{R}_{\pi}+\mathcal{R}_{\tilde{\pi}}+\mathfrak{D}_{\mathrm{train}}.
	\end{equation}
}
\begin{proof}
	Let's first rewrite $\mathfrak{D}_{\mathrm{eval}}$ as
	\begin{equation}
		\mathfrak{D}_{\mathrm{eval}}=\mathop{\mathbb{E}}_{\substack{\tilde{f}\sim p_{\mathrm{eval}}(\cdot)\\s\sim d^{\mu_{\tilde{f}}}(\cdot)}}\left[D_{\mathrm{TV}}(\tilde{\mu}_{\tilde{f}}\Vert\mu_{\tilde{f}})[s]\right].
	\end{equation}
	For another $f\in\mathcal{F}_{\mathrm{train}}$, by repeatedly using the triangle inequality of the TV distance, we have
	\begin{equation}
		\begin{split}
			\mathfrak{D}_{\mathrm{eval}}&=\mathop{\mathbb{E}}_{\substack{\tilde{f}\sim p_{\mathrm{eval}}(\cdot)\\s\sim d^{\mu_{\tilde{f}}}(\cdot)}}\left[D_{\mathrm{TV}}(\tilde{\mu}_{\tilde{f}}\Vert\mu_{\tilde{f}})[s]\right]\\
			&\leq\mathop{\mathbb{E}}_{\substack{\tilde{f}\sim p_{\mathrm{eval}}(\cdot)\\s\sim d^{\mu_{\tilde{f}}}(\cdot)}}\left[D_{\mathrm{TV}}(\tilde{\mu}_{\tilde{f}}\Vert\tilde{\mu}_{f})[s]+D_{\mathrm{TV}}(\tilde{\mu}_{f}\Vert\mu_f)[s]+D_{\mathrm{TV}}(\mu_f\Vert\mu_{\tilde{f}})[s]\right] \\
			&=\mathop{\mathbb{E}}_{\substack{\tilde{f}\sim p_{\mathrm{eval}}(\cdot)\\s\sim d^{\mu_{\tilde{f}}}(\cdot)}}\left[D_{\mathrm{TV}}(\tilde{\mu}_{\tilde{f}}\Vert\tilde{\mu}_{f})[s]\right]+\mathop{\mathbb{E}}_{\substack{\tilde{f}\sim p_{\mathrm{eval}}(\cdot)\\s\sim d^{\mu_{\tilde{f}}}(\cdot)}}\left[D_{\mathrm{TV}}(\tilde{\mu}_{f}\Vert\mu_f)[s]\right]+\mathop{\mathbb{E}}_{\substack{\tilde{f}\sim p_{\mathrm{eval}}(\cdot)\\s\sim d^{\mu_{\tilde{f}}}(\cdot)}}\left[D_{\mathrm{TV}}(\mu_f\Vert\mu_{\tilde{f}})[s]\right], \\
		\end{split}
	\end{equation}
	taking the expectation of both sides of the inequality with respect to $f\sim p_{\mathrm{train}}(\cdot)$, we obtain
	\begin{equation}
		\mathop{\mathbb{E}}_{f\sim p_{\mathrm{train}}(\cdot)}\left[\mathfrak{D}_{\mathrm{eval}}\right]\leq\mathop{\mathbb{E}}_{\substack{f\sim p_{\mathrm{train}}(\cdot)\\\tilde{f}\sim p_{\mathrm{eval}}(\cdot)\\s\sim d^{\mu_{\tilde{f}}}(\cdot)}}\left[D_{\mathrm{TV}}(\tilde{\mu}_{\tilde{f}}\Vert\tilde{\mu}_{f})[s]\right]+\mathop{\mathbb{E}}_{\substack{f\sim p_{\mathrm{train}}(\cdot)\\\tilde{f}\sim p_{\mathrm{eval}}(\cdot)\\s\sim d^{\mu_{\tilde{f}}}(\cdot)}}\left[D_{\mathrm{TV}}(\tilde{\mu}_{f}\Vert\mu_f)[s]\right]+\mathop{\mathbb{E}}_{\substack{f\sim p_{\mathrm{train}}(\cdot)\\\tilde{f}\sim p_{\mathrm{eval}}(\cdot)\\s\sim d^{\mu_{\tilde{f}}}(\cdot)}}\left[D_{\mathrm{TV}}(\mu_f\Vert\mu_{\tilde{f}})[s]\right].
	\end{equation}
	Since $\mathfrak{D}_{\mathrm{eval}}$ is independent of $f$, it becomes a constant after taking the expectation, which is
	\begin{equation}
		\mathfrak{D}_{\mathrm{eval}}\leq\mathop{\mathbb{E}}_{\substack{f\sim p_{\mathrm{train}}(\cdot)\\\tilde{f}\sim p_{\mathrm{eval}}(\cdot)\\s\sim d^{\mu_{\tilde{f}}}(\cdot)}}\left[D_{\mathrm{TV}}(\tilde{\mu}_{\tilde{f}}\Vert\tilde{\mu}_{f})[s]\right]+\mathop{\mathbb{E}}_{\substack{f\sim p_{\mathrm{train}}(\cdot)\\\tilde{f}\sim p_{\mathrm{eval}}(\cdot)\\s\sim d^{\mu_{\tilde{f}}}(\cdot)}}\left[D_{\mathrm{TV}}(\tilde{\mu}_{f}\Vert\mu_f)[s]\right]+\mathop{\mathbb{E}}_{\substack{f\sim p_{\mathrm{train}}(\cdot)\\\tilde{f}\sim p_{\mathrm{eval}}(\cdot)\\s\sim d^{\mu_{\tilde{f}}}(\cdot)}}\left[D_{\mathrm{TV}}(\mu_f\Vert\mu_{\tilde{f}})[s]\right].
	\end{equation}
	Note that $\tilde{\pi}$ is $\mathcal{R}_{\tilde{\pi}}$-robust, and $\pi$ is $\mathcal{R}_{\pi}$-robust, we can thus bound the first term:
	\begin{equation}\label{kl 1}
		\begin{split}
			\mathop{\mathbb{E}}_{\substack{f\sim p_{\mathrm{train}}(\cdot)\\\tilde{f}\sim p_{\mathrm{eval}}(\cdot)\\s\sim d^{\mu_{\tilde{f}}}(\cdot)}}\left[D_{\mathrm{TV}}(\tilde{\mu}_{\tilde{f}}\Vert\tilde{\mu}_{f})[s]\right]=&\mathop{\mathbb{E}}_{\substack{f\sim p_{\mathrm{train}}(\cdot)\\\tilde{f}\sim p_{\mathrm{eval}}(\cdot)}}\left[\sum_{s\in\mathcal{S}}d^{\mu_{\tilde{f}}}(s)\cdot D_{\mathrm{TV}}(\tilde{\mu}_{\tilde{f}}\Vert\tilde{\mu}_{f})[s]\right]\\
			\leq&\mathop{\mathbb{E}}_{\substack{f\sim p_{\mathrm{train}}(\cdot)\\\tilde{f}\sim p_{\mathrm{eval}}(\cdot)}}\left[\sum_{s\in\mathcal{S}}d^{\mu_{\tilde{f}}}(s)\cdot\mathcal{R}_{\tilde{\pi}}\right]=\mathcal{R}_{\tilde{\pi}}\mathop{\mathbb{E}}_{\substack{f\sim p_{\mathrm{train}}(\cdot)\\\tilde{f}\sim p_{\mathrm{eval}}(\cdot)}}\left[\sum_{s\in\mathcal{S}}d^{\mu_{\tilde{f}}}(s)\right]=\mathcal{R}_{\tilde{\pi}}. \\
		\end{split}
	\end{equation}
	Similarly, we can bound the third term:
	\begin{equation} \label{kl 3}
		\begin{split}
			\mathop{\mathbb{E}}_{\substack{f\sim p_{\mathrm{train}}(\cdot)\\\tilde{f}\sim p_{\mathrm{eval}}(\cdot)\\s\sim d^{\mu_{\tilde{f}}}(\cdot)}}\left[D_{\mathrm{TV}}(\mu_{\tilde{f}}\Vert\mu_{f})[s]\right]=&\mathop{\mathbb{E}}_{\substack{f\sim p_{\mathrm{train}}(\cdot)\\\tilde{f}\sim p_{\mathrm{eval}}(\cdot)}}\left[\sum_{s\in\mathcal{S}}d^{\mu_{\tilde{f}}}(s)\cdot D_{\mathrm{TV}}(\mu_{\tilde{f}}\Vert\mu_{f})[s]\right]\\
			\leq&\mathop{\mathbb{E}}_{\substack{f\sim p_{\mathrm{train}}(\cdot)\\\tilde{f}\sim p_{\mathrm{eval}}(\cdot)}}\left[\sum_{s\in\mathcal{S}}d^{\mu_{\tilde{f}}}(s)\cdot\mathcal{R}_{\pi}\right]=\mathcal{R}_{\pi}\mathop{\mathbb{E}}_{\substack{f\sim p_{\mathrm{train}}(\cdot)\\\tilde{f}\sim p_{\mathrm{eval}}(\cdot)}}\left[\sum_{s\in\mathcal{S}}d^{\mu_{\tilde{f}}}(s)\right]=\mathcal{R}_{\pi}. \\
		\end{split}
	\end{equation}
	Next, we are trying to bound the second term, which is similar to $\mathfrak{D}_{\mathrm{train}}$. Note that $\mathfrak{D}_{\mathrm{train}}$ is independent of $\tilde{f}$, we can thus rewrite it as
	\begin{equation}
		\mathfrak{D}_{\mathrm{train}}=\mathop{\mathbb{E}}_{\substack{f\sim p_{\mathrm{train}}(\cdot)\\s\sim d^{\mu_f}(\cdot)}}\left[D_{\mathrm{TV}}(\tilde{\mu}_f\Vert\mu_f)[s]\right]=\mathop{\mathbb{E}}_{\substack{f\sim p_{\mathrm{train}}(\cdot)\\\tilde{f}\sim p_{\mathrm{eval}}(\cdot)\\s\sim d^{\mu_f}(\cdot)}}\left[D_{\mathrm{TV}}(\tilde{\mu}_{f}\Vert\mu_f)[s]\right],
	\end{equation}
	then
	\begin{equation}
		\begin{split}
			&\left\vert\mathop{\mathbb{E}}_{\substack{f\sim p_{\mathrm{train}}(\cdot)\\\tilde{f}\sim p_{\mathrm{eval}}(\cdot)\\s\sim d^{\mu_{\tilde{f}}}(\cdot)}}\left[D_{\mathrm{TV}}(\tilde{\mu}_{f}\Vert\mu_f)[s]\right]-\mathfrak{D}_{\mathrm{train}}\right\vert\\
			=&\left\vert\mathop{\mathbb{E}}_{\substack{f\sim p_{\mathrm{train}}(\cdot)\\\tilde{f}\sim p_{\mathrm{eval}}(\cdot)\\s\sim d^{\mu_{\tilde{f}}}(\cdot)}}\left[D_{\mathrm{TV}}(\tilde{\mu}_{f}\Vert\mu_f)[s]\right]-\mathop{\mathbb{E}}_{\substack{f\sim p_{\mathrm{train}}(\cdot)\\\tilde{f}\sim p_{\mathrm{eval}}(\cdot)\\s\sim d^{\mu_f}(\cdot)}}\left[D_{\mathrm{TV}}(\tilde{\mu}_{f}\Vert\mu_f)[s]\right]\right\vert \\
			=&\left\vert\int_{\mathcal{F}_{\mathrm{train}}}p_{\mathrm{train}}(f)\int_{\mathcal{F}_{\mathrm{eval}}}p_{\mathrm{eval}}(\tilde{f})\left\{\mathop{\mathbb{E}}_{s\sim d^{\mu_{\tilde{f}}}(\cdot)}\left[D_{\mathrm{TV}}(\tilde{\mu}_{f}\Vert\mu_f)[s]\right]-\mathop{\mathbb{E}}_{s\sim d^{\mu_f}(\cdot)}\left[D_{\mathrm{TV}}(\tilde{\mu}_{f}\Vert\mu_f)[s]\right]\right\}\mathrm{d}\tilde{f}\mathrm{d}f\right\vert \\
			\leq&\int_{\mathcal{F}_{\mathrm{train}}}p_{\mathrm{train}}(f)\int_{\mathcal{F}_{\mathrm{eval}}}p_{\mathrm{eval}}(\tilde{f})\left\{\left\vert\mathop{\mathbb{E}}_{s\sim d^{\mu_{\tilde{f}}}(\cdot)}\left[D_{\mathrm{TV}}(\tilde{\mu}_{f}\Vert\mu_f)[s]\right]-\mathop{\mathbb{E}}_{s\sim d^{\mu_f}(\cdot)}\left[D_{\mathrm{TV}}(\tilde{\mu}_{f}\Vert\mu_f)[s]\right]\right\vert\right\}\mathrm{d}\tilde{f}\mathrm{d}f. \\
		\end{split}
	\end{equation}
	Note that,
	\begin{equation}
		\left\vert\mathop{\mathbb{E}}_{s\sim d^{\mu_{\tilde{f}}}(\cdot)}\left[D_{\mathrm{TV}}(\tilde{\mu}_{f}\Vert\mu_f)[s]\right]-\mathop{\mathbb{E}}_{s\sim d^{\mu_f}(\cdot)}\left[D_{\mathrm{TV}}(\tilde{\mu}_{f}\Vert\mu_f)[s]\right]\right\vert\leq\left\Vert d^{\mu_{\tilde{f}}}-d^{\mu_f}\right\Vert_1\cdot\left\Vert D_{\mathrm{TV}}(\tilde{\mu}_{f}\Vert\mu_f)[s]\right\Vert_{\infty}.
	\end{equation}
	According to Lemma \ref{lemma 2},
	\begin{equation}
		\left\Vert d^{\mu_{\tilde{f}}}-d^{\mu_f}\right\Vert_1\leq\frac{2\gamma}{1-\gamma}\mathop{\mathbb{E}}_{s\sim d^{\mu_f}(\cdot)}\left[D_{\mathrm{TV}}(\mu_{\tilde{f}}\Vert\mu_f)[s]\right],
	\end{equation}
	$\pi$ is $\mathcal{R}_{\pi}$-robust, so,
	\begin{equation}
		\left\Vert d^{\mu_{\tilde{f}}}-d^{\mu_f}\right\Vert_1\leq\frac{2\gamma}{1-\gamma}\mathop{\mathbb{E}}_{s\sim d^{\mu_f}(\cdot)}\left[D_{\mathrm{TV}}(\mu_{\tilde{f}}\Vert\mu_f)[s]\right]=\frac{2\gamma}{1-\gamma}\sum_{s\in\mathcal{S}}d^{\mu_{f}}(s)\cdot D_{\mathrm{TV}}(\mu_{\tilde{f}}\Vert\mu_f)[s]\leq\frac{2\gamma}{1-\gamma}\mathcal{R}_{\pi}.
	\end{equation}
	As a result,
	\begin{equation}
		\begin{split}
			&\left\vert\mathop{\mathbb{E}}_{\substack{f\sim p_{\mathrm{train}}(\cdot)\\\tilde{f}\sim p_{\mathrm{eval}}(\cdot)\\s\sim d^{\mu_{\tilde{f}}}(\cdot)}}\left[D_{\mathrm{TV}}(\tilde{\mu}_{f}\Vert\mu_f)[s]\right]-\mathfrak{D}_{\mathrm{train}}\right\vert\\
			\leq&\int_{\mathcal{F}_{\mathrm{train}}}p_{\mathrm{train}}(f)\int_{\mathcal{F}_{\mathrm{eval}}}p_{\mathrm{eval}}(\tilde{f})\cdot\left\{\left\vert\mathop{\mathbb{E}}_{s\sim d^{\mu_{\tilde{f}}}(\cdot)}\left[D_{\mathrm{TV}}(\tilde{\mu}_{f}\Vert\mu_f)[s]\right]-\mathop{\mathbb{E}}_{s\sim d^{\mu_f}(\cdot)}\left[D_{\mathrm{TV}}(\tilde{\mu}_{f}\Vert\mu_f)[s]\right]\right\vert\right\}\mathrm{d}\tilde{f}\mathrm{d}f \\
			\leq&\int_{\mathcal{F}_{\mathrm{train}}}p_{\mathrm{train}}(f)\int_{\mathcal{F}_{\mathrm{eval}}}p_{\mathrm{eval}}(\tilde{f})\cdot\left\{\frac{2\gamma}{1-\gamma}\mathcal{R}_{\pi}\cdot\max_{s}D_{\mathrm{TV}}(\tilde{\mu}_{f}\Vert\mu_f)[s]\right\}\mathrm{d}\tilde{f}\mathrm{d}f \\
			=&\int_{\mathcal{F}_{\mathrm{train}}}p_{\mathrm{train}}(f)\cdot\left\{\frac{2\gamma}{1-\gamma}\mathcal{R}_{\pi}\cdot\max_{s}D_{\mathrm{TV}}(\tilde{\mu}_{f}\Vert\mu_f)[s]\right\}\cdot\int_{\mathcal{F}_{\mathrm{eval}}}p_{\mathrm{eval}}(\tilde{f})\mathrm{d}\tilde{f}\mathrm{d}f \\
			=&\int_{\mathcal{F}_{\mathrm{train}}}p_{\mathrm{train}}(f)\cdot\left\{\frac{2\gamma}{1-\gamma}\mathcal{R}_{\pi}\cdot\max_{s}D_{\mathrm{TV}}(\tilde{\mu}_{f}\Vert\mu_f)[s]\right\}\mathrm{d}f=\frac{2\gamma}{1-\gamma}\mathcal{R}_{\pi}\int_{\mathcal{F}_{\mathrm{train}}}p_{\mathrm{train}}(f)\cdot\max_{s}D_{\mathrm{TV}}(\tilde{\mu}_{f}\Vert\mu_f)[s]\mathrm{d}f. \\
		\end{split}
	\end{equation}
	We previously defined $\sigma_{\mathrm{train}}=\max_{f\in\mathcal{F}_{\mathrm{train}}}\left\{\max_{s}D_{\mathrm{TV}}(\tilde{\mu}_f\Vert\mu_f)[s]\right\}$, so that
	\begin{equation}
		\begin{split}
			\left\vert\mathop{\mathbb{E}}_{\substack{f\sim p_{\mathrm{train}}(\cdot)\\\tilde{f}\sim p_{\mathrm{eval}}(\cdot)\\s\sim d^{\mu_{\tilde{f}}}(\cdot)}}\left[D_{\mathrm{TV}}(\tilde{\mu}_{f}\Vert\mu_f)[s]\right]-\mathfrak{D}_{\mathrm{train}}\right\vert\leq&\frac{2\gamma}{1-\gamma}\mathcal{R}_{\pi}\int_{\mathcal{F}_{\mathrm{train}}}p_{\mathrm{train}}(f)\cdot\max_{s}D_{\mathrm{TV}}(\tilde{\mu}_{f}\Vert\mu_f)[s]\mathrm{d}f \\
			\leq&\frac{2\gamma\sigma_{\mathrm{train}}}{1-\gamma}\mathcal{R}_{\pi}\int_{\mathcal{F}_{\mathrm{train}}}p_{\mathrm{train}}(f)\mathrm{d}f=\frac{2\gamma\sigma_{\mathrm{train}}}{1-\gamma}\mathcal{R}_{\pi}, \\
		\end{split}
	\end{equation}
	thus, the second term is bounded by
	\begin{equation}\label{kl 2}
		\mathop{\mathbb{E}}_{\substack{f\sim p_{\mathrm{train}}(\cdot)\\\tilde{f}\sim p_{\mathrm{eval}}(\cdot)\\s\sim d^{\mu_{\tilde{f}}}(\cdot)}}\left[D_{\mathrm{TV}}(\tilde{\mu}_{f}\Vert\mu_f)[s]\right]\leq\frac{2\gamma\sigma_{\mathrm{train}}}{1-\gamma}\mathcal{R}_{\pi}+\mathfrak{D}_{\mathrm{train}}.
	\end{equation}
	Finally, combining (\ref{kl 1}), (\ref{kl 3}) and (\ref{kl 2}), we have
	\begin{equation}
		\mathfrak{D}_{\mathrm{eval}}\leq\left(1+\frac{2\gamma\sigma_{\mathrm{train}}}{1-\gamma}\right)\mathcal{R}_{\pi}+\mathcal{R}_{\tilde{\pi}}+\mathfrak{D}_{\mathrm{train}},
	\end{equation}
	concluding the proof.
\end{proof}

\end{document}